%% file: main.tex
\documentclass[10pt,twocolumn,letterpaper]{article}

\usepackage[final]{cvpr}      % To produce the REVIEW version
\input{preamble}
\definecolor{cvprblue}{rgb}{0.21,0.49,0.74}
\usepackage{algorithm}
\usepackage{algorithmicx}
\usepackage{algpseudocode}
\usepackage{tikz}
\usepackage{amsmath}
\usepackage{makecell}
\usepackage{tikz}
\usepackage{amsthm}
\usepackage{amssymb}
\usepackage{booktabs}
\usepackage{rotating}
\usepackage{tabularx}
\usepackage{rotating}
\usepackage{multirow}
\usepackage{hyperref}
\usepackage{svg}
\usetikzlibrary{shapes, arrows, positioning, fit, calc}
\def\paperID{*****} % *** Enter the Paper ID here
\def\confName{CVPR}
\def\confYear{2026}

\title{Hilbert-Geo: Solving Solid Geometric Problems by Neural-Symbolic Reasoning}

\author{
Ruoran Xu$^1$, Haoyu Cheng$^1$, Bin Dong$^2$, Qiufeng Wang$^{1,\dagger}$ \\
$^1$Xi'an Jiaotong-Liverpool University \quad $^2$Ricoh Software Research Center Beijing Co.,Ltd.
}

\begin{document}
\maketitle
\begingroup
\renewcommand{\thefootnote}{\fnsymbol{footnote}}
\footnotetext[2]{Corresponding author: qiufeng.wang@xjtlu.edu.cn}
\endgroup
\input{sec/0_abstract}    
\input{sec/1_intro}
\input{sec/2_formatting}
\input{sec/3_finalcopy}
\input{sec/4}
\input{sec/5}
\input{sec/6}
\input{sec/7}
\input{sec/9}

\section*{Acknowledgments}
This work was supported by National Natural Science Foundation of China under No.62436009, and Jiangsu Science and Technology Programme BK20251812, and Open Research Fund of The State Key Laboratory of Multimodal Artificial Intelligence Systems.

{
    \small
    \bibliographystyle{ieeenat_fullname}
    \bibliography{main}
}
\input{sec/aA}
\input{sec/aB}
\input{sec/aC}
\input{sec/aD}
\input{sec/aE}
% WARNING: do not forget to delete the supplementary pages from your submission 
% \input{sec/X_suppl}

\end{document}

%% file: sec/0_abstract.tex
\begin{abstract}
Geometric problem solving, as a typical multimodal reasoning problem, has attracted much attention and made great progress recently, however most of works focus on plane geometry while usually fail in solid geometry due to 3D spatial diagrams and complex reasoning. To bridge this gap, we introduce Hilbert-Geo, the first unified formal language framework for solid geometry, including an extensive predicate library and a dedicated theorem bank. 
Based on this framework, we propose a Parse2Reason method containing two steps of first parsing then reasoning. 
In the parsing step, we utilize conditional description language (CDL), a formalized language composed of predicates specifically designed to construct geometric conditions, to represent both problem description (natural text) and solid diagrams (visual image).
In the reasoning step, we leverage those formal CDL and the theorem bank to perform relational inference and algebraic computation, generating strictly correct, verifiable, and human-readable reasoning processes.
Notably, our proposed Hilbert-Geo is also applicable to plane geometry.
To advance geometric reasoning, we curate two expert-annotated dataset SolidFGeo2k and PlaneFGeo3k, which are furnished with geometric formal language annotations, solutions and answers. Extensive experiments show that our proposed method achieves the state-of-the-art (SOTA) performance 77.3\% in SolidFGeo2k and 84.1\% in MathVerse-Solid (one small subset in MathVerse dedicated to solid geometry), substantially outperforming leading MLLMs, such as Gemini-2.5-pro (54.2\% on SolidFGeo2k) and GPT-5 (62.9\% on MathVerse-Solid). In addition, our method achieves the SOTA accuracy 80.2\% in PlaneFGeo3k, demonstrating the generality of the Hilbert-Geo in geometric reasoning. 
Our code and datasets are released at \href{https://github.com/PremiLab-Math/Hilbert-Geo}{https://github.com/PremiLab-Math/Hilbert-Geo}.
\end{abstract}

%% file: sec/1_intro.tex
\section{Introduction}
\label{sec:intro}
%The ability to reason about geometry has long been considered a hallmark of human intelligence. From the axiomatic foundations laid by Euclid to its modern applications in physics, engineering, and computer graphics, geometry provides a rich and structured domain for testing the limits of logical and spatial reasoning. In the field of Artificial Intelligence, automated geometry problem solving serves as a crucial benchmark, pushing systems beyond pattern recognition towards true symbolic understanding and deduction [2, 29, 47].

``Arithmetic symbols are written figures, and geometric figures are drawn formulas.''---David Hilbert\\
\begin{figure}[ht]
    \centering
    \includegraphics[width=1\linewidth]{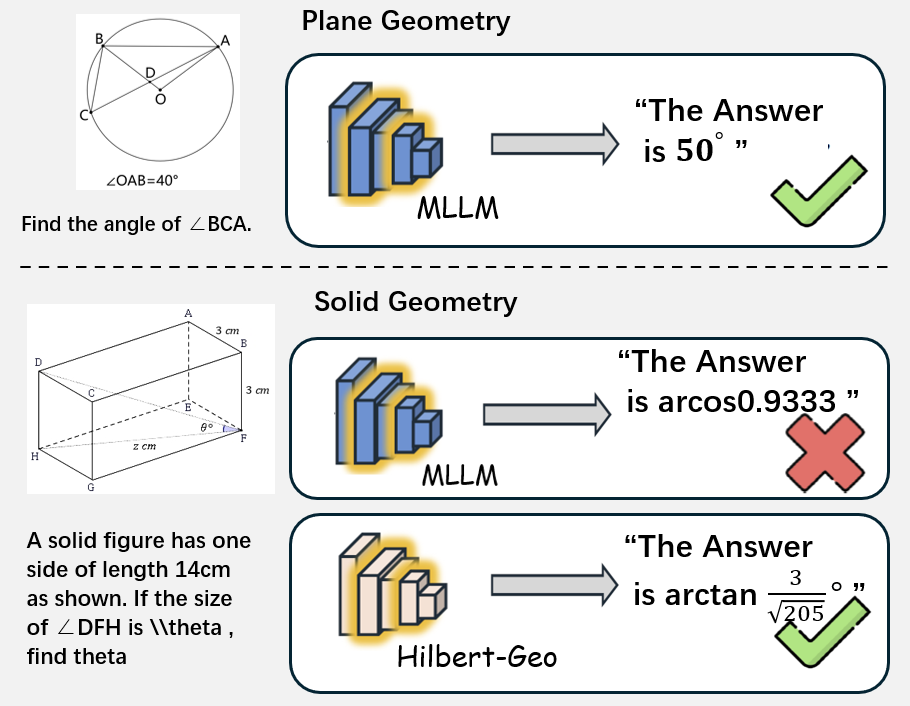}
    \caption{MLLMs struggle with problems in the field of solid geometry (middle) while showing performance in plane geometry problems (top). Our proposed Hilbert-Geo correctly solves this solid geometric problem (bottom).}
    \label{fig:1}
\end{figure}

In recent years, the AI community has made notable breakthroughs in plane geometry, with state-of-the-art systems proficient in solving routine problems and interpreting diagrams, supported by advances in formal reasoning and visual understanding~\cite{murphy2024autoformalizing}. This progress contrasts sharply with solid geometry, a far more complex and underexplored frontier (See Fig.\ref{fig:1}) that poses formidable hurdles for MLLMs, manifest in three key aspects:
First, solid geometry demands precise grasp of 3D spatial relationships and geometric core concepts~\cite{pittalis2010types}, requiring reasoning about occluded structures and spatial transformations. This stretches MLLMs’ ability to model abstract spatial logic~\cite{battaglia2018relational}, leading to reasoning errors and knowledge gaps~\cite{wang2025solidgeo}.
Second, its inherent multimodality requires sophisticated visual-linguistic grounding: MLLMs must parse text, interpret 2D representations of 3D objects, infer implicit spatial info, and align language with visuals while maintaining cross-modal consistency—often resulting in perception errors and hallucinations~\cite{vidhalluc2025}.
Third, MLLMs are prone to calculation errors in quantitative tasks, further undermining reliability.

\begin{figure}[htbp] % 位置参数：优先左栏内排版
    \centering
    % 左子图：占左栏宽度的 ~45%
    \begin{subfigure}{0.46\linewidth} % \linewidth 为当前栏宽（左栏）
        \centering
        \includegraphics[width=\textwidth]{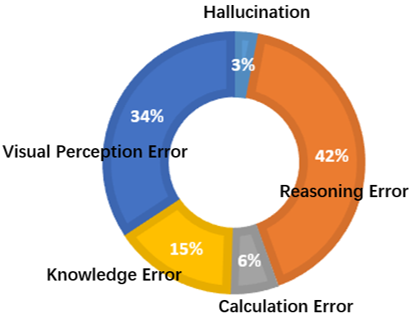} % 充满子图宽度
        \caption{Gemini 2.5 Pro}
        \label{fig:left1}
    \end{subfigure}
    \hfill % 两图间自动填充空白
    % 右子图：占左栏宽度的 ~45%
    \begin{subfigure}{0.5\linewidth}
        \centering
        \includegraphics[width=\textwidth]{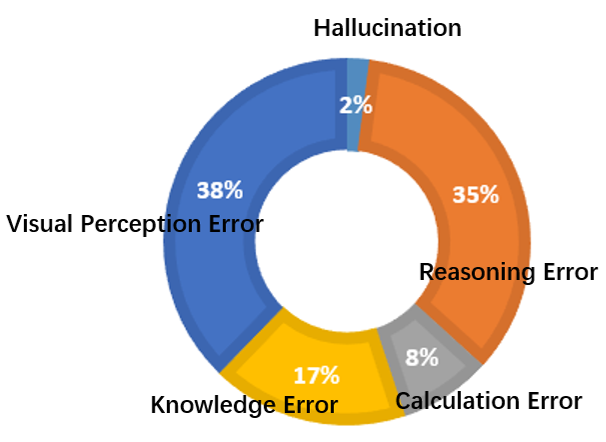}
        \caption{GPT-5}
        \label{fig:right1}
    \end{subfigure}
    \caption{ Error distribution in SolidFGeo2k of Gemini and GPT}
    \label{fig:two-in-left}
\label{error2}
\end{figure}
Existing solid geometry datasets and benchmarks such as MathVerse-Solid~\cite{zhang2024mathverse} and SolidGeo~\cite{wang2025solidgeo} are scattered and flawed. SolidGeo, with 3,000 samples, draws data from single exam question websites, posing potential data leakage risks. Its answers, generated by LLMs, are prone to undetected errors. Moreover, the dataset contains some math modeling and physics problems with low geometric relevance, failing to meet current research and application needs effectively. Additionally, most existing datasets only assess models based on final answers and are prone to data contamination~\cite{lu2023mathvista}.
To mitigate these limitations, we curate an expert-annotated dataset SolidFGeo2k to alleviate the scarcity of high-quality solid reasoning resources and establish a critical benchmark. This dataset is furnished with geometric formal language annotations, detailed reasoning path annotations, and accurate answer labels.

As shown in Fig.~\ref{error2}, we conducted a rigorous fine-grained error analysis on two representative models (Gemini 2.5 Pro~\cite{google2025gemini25} and GPT-5~\cite{openai2025gpt5})based on their performance on the SolidFGeo2k dataset. Specifically, we scrutinized all problems that the two models failed to solve correctly, categorizing the resulting errors into five distinct types~\cite{pittalis2010types}.

\begin{figure}[ht]
    \centering
    \includegraphics[width=1\linewidth]{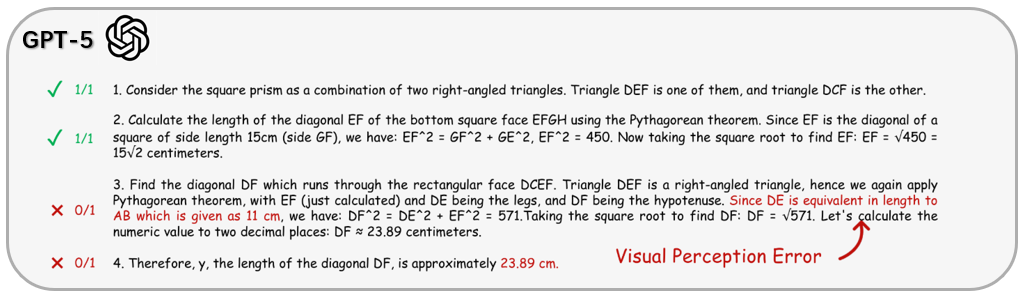}
    \caption{Visual Perception Error for the Problem in Fig.~\ref{fig:M2f}}
    \label{fig:placeholder1}
\end{figure}

Among all error categories, visual perception errors and reasoning errors are the predominant contributors. For Gemini 2.5 Pro, these two error types collectively account for 76\% of the total erroneous cases; for GPT-5, their combined proportion reaches 73\%. GPT-5 demonstrates a lower reasoning error rate (35\%) in comparison to Gemini 2.5 Pro (42\%), while Gemini 2.5 Pro exhibits a lower visual perception error rate (34\%) than GPT-5 (38\%) (See Fig.~\ref{fig:placeholder1}, an example of visual perception error).

To address these inherent limitations, this study fuses the perceptual capabilities of modern neural networks with the rigor of formal logic~\cite{hasan2015formal}. Building on the foundational theoretical framework of FormalGeo, a plane geometry formalization framework~\cite{zhang2023formalgeo}, we propose Hilbert-Geo: an explicitly structured formally structured framework~\cite{li2024survey} designed to tackle the unique challenges of solid geometry. This framework overcomes the core technical bottlenecks intrinsic to three-dimensional reasoning, laying a more robust foundation for solving solid geometric problems.

Comprehensive experimental evaluations demonstrate that Hilbert-Geo achieves state-of-the-art (SOTA) performance, attaining 77.3\% accuracy on the SolidFGeo2k benchmark and 84.1\% on MathVerse-Solid. This performance significantly outperforms leading multimodal large language models (MLLMs), including Gemini-2.5-Pro (which scored 54.2\% on SolidFGeo2k) and GPT-5 (with 62.9\% accuracy on MathVerse-Solid).

In summary, our primary contributions are fourfold:
\begin{itemize}
\item \textbf{Solid Geometry Formal Framework.} We develop a Solid Geometry Predicate Library for precise representation of entities and spatial relationships, complemented by a Solid Geometry Theorem bank.%formalizing hundreds of axioms and definitions via novel Geometric Predicates Logic.
\item \textbf{Multimmodal Formalization Parser.} We design a pipeline that parses both natural language and visual diagrams, resolving cross-modal ambiguities to ground textual entities in visuals, and translate problems into formal geometric language.
\item \textbf{Geometry Reasoning Engine.} We introduce an engine for inference on formalized problems, enabling an efficient search of verifiable and human-readable solutions.
\item \textbf{Two Geometric Datasets.} We construct SolidFGeo2k (solid geometry) and PlaneFGeo3k (plane geometry), both with meticulous formal annotations.
\end{itemize}

%% file: sec/2_formatting.tex
\section{Related Work}
\label{sec:formatting}
\subsection{Multimodal LLMs in Stereometric Reasoning}
While recent years have seen remarkable progress in LLMs and MLLMs, with strong performance across tasks like natural language understanding, visual question answering~\cite{zhao2023survey, hadi2023survey, wu2024survey, jiang2024survey, naveed2023comprehensive,chen2024overthinking,wang2025underthinking}, and multimodal mathematical reasoning (e.g., GPT-5~\cite{openai2024hello}, Gemini~\cite{gemini2023} surpass average human performance on MathVista \cite{lu2023mathvista}), they face critical limitations in stereometric reasoning~\cite{jian2023solving}. First, stereometric knowledge is heterogeneously disseminated across text, graphics, and intuition without a unified formal paradigm~\cite{wu2025nesygeo}, yet models rely on informal natural language, leading to ambiguity and failure to encode 3D entities' topological and metric details. Second, existing benchmarks focus mostly on plane geometry~\cite{ning2023symbolic, ning2025gns}, neglecting solid geometry's unique spatial and deductive demands. Consequently, LLMs/MLLMs struggle to deliver reliable stereometric reasoning and computations, highlighting an urgent need for targeted advancements.
\subsection{Geometric Reasoning with Formalization}
The quest for automated geometric problem solving has a rich history. Early systems—Wu's Method and subsequent algebraic approaches (e.g., Gröbner bases~\cite{hibi2003groebner}, cylindrical algebraic decomposition~\cite{arnon1984cad})—offered powerful mechanisms: they solved geometric problems by transforming them into algebraic equations.

Recent work has shifted to formalizing geometric problems for AI. Geometry3K~\cite{lu2021intergps} translates problems into formal statements. GeoQA~\cite{chen2021geoqa} and later geometry reasoning systems such as UniGeo~\cite{chen2022unigeo} use program-like or expression-tree representations to model problem-solving steps~\cite{xia2025geox}. The FormalGeo system~\cite{zhang2023formalgeo} introduced a structured plane geometry formalization approach, with a comprehensive predicate library and theorem system. While these advances plane geometry formalization, their scope remained confined to the plane.

Solid geometric knowledge exhibits formal heterogeneity: it is scattered across textual descriptions, spatial graphics, and abstract intuition, lacking a unified representation paradigm. Meanwhile, spatial relationships are inherently complex, and informal descriptions often cause ambiguous interpretations and reasoning hallucinations—conflicting with mathematical rigor. Worsening these issues, current formalization methods (e.g., plane geometry formalization, general logical systems) cannot effectively capture the topological characteristics and metric relationships of three-dimensional entities (e.g., polyhedrons, spherical surfaces, complex geometries).

%% file: sec/3_finalcopy.tex
\section{Solid Geometry Formal Language}
\subsection{Review of Geometry Formalization Theory}
\label{gft}
Geometry Formalization theory ~\cite{zhang2023formalgeo} is centered on the conversion of geometric problems, encompassing known conditions in natural language, problem goals, and geometric diagrams, along with their solution procedures into a unified and precise formal language. It comprises two key components: geometry ontology, which synthesizes core geometric concepts and their interconnections through a knowledge graph focused on Euclidean plane geometry, serving to guide the design of formal systems~\cite{badescu2004projective}; geometry representation theory, which investigates the formal expression of geometric knowledge within the framework of consistency theory, guaranteeing the coherence of both components (e.g. formalized diagrams) and processes (e.g. formal descriptions).%The development of this theory has been practically implemented in systems like FGeo-DRL, which integrates deductive reasoning with deep reinforcement learning for geometric problem solving ~\cite{zou2024fgeo}.
\subsection{Solid Geometry Formalization Theory}
The extension of the formal geometric theoretical framework to the domain of solid geometry is predominantly accomplished via the augmentation of geometric primitives and generalized combinatorial strategies. Furthermore, the original plane geometric formal category is encapsulated as an entity designated "plane", which serves as the foundational reference for constructing solid geometric models. Specifically, augmented geometric primitives enable the precise description of basic solid components, while generalized combinatorial strategies include the systematic integration of coverage techniques, clipping techniques, and topological connections, among others. This hierarchical extension strategy not only preserves the completeness of the original plane geometric formalization but also endows the framework with the capability to address complex three-dimensional geometric problems, thereby laying a solid theoretical foundation for subsequent research on solid geometric modeling and automated reasoning.
%geometry formal language, structured into geometry definition language (gdl)—incorporating predicate and theorem definitions to configure solvers, thereby enhancing shareability and extensibility—and condition declaration language (cdl)—employed to input problems through construction, condition, and objective statements that characterize specific geometric problems.
More details in Appendix~\ref{app:a1}.
\subsection{Predicate Library and Theorem Bank}
Predicates act as the fundamental building blocks for characterizing geometric entities. They are integrated into the Geometric Conditional Description Language (CDL), which not only eliminates ambiguities across multimodal representations but also establishes a robust premise for subsequent reasoning tasks ~\cite{zhang2023formalgeo}. 

Further elaborations are provided in Appendix~\ref{app:a2}. The Solid Predicate Library encompasses 120 predicates, comprising 35 fundamental predicates natively integrated into the solver, along with 20 entity types, 35 entity relationships, and 30 attribute descriptors defined via a dedicated predicate definition language. The formal specification for each predicate includes its name, declaration of point variables, validity check assertions, support for multiple representations, and mechanism for automatic expansion.

Theorems function as the cornerstone of reasoning processes, enabling search-based inference through the self-expansion of geometric entities and the application of theorem-driven deduction ~\cite{lu2021intergps}. Notably, when defining attributes, symbolic form declarations are also incorporated. The Theorem Bank is an extension of the plane FormalGeo framework, structured into two components: premises and conclusions (detailed in the accompanying table), and encompasses 220 theorems.
 

%% file: sec/4.tex
\section{Datasets of SolidFGeo2k and PlaneFGeo3k}
\begin{figure*}[ht]
    \centering
    \includegraphics[width=0.9\linewidth]{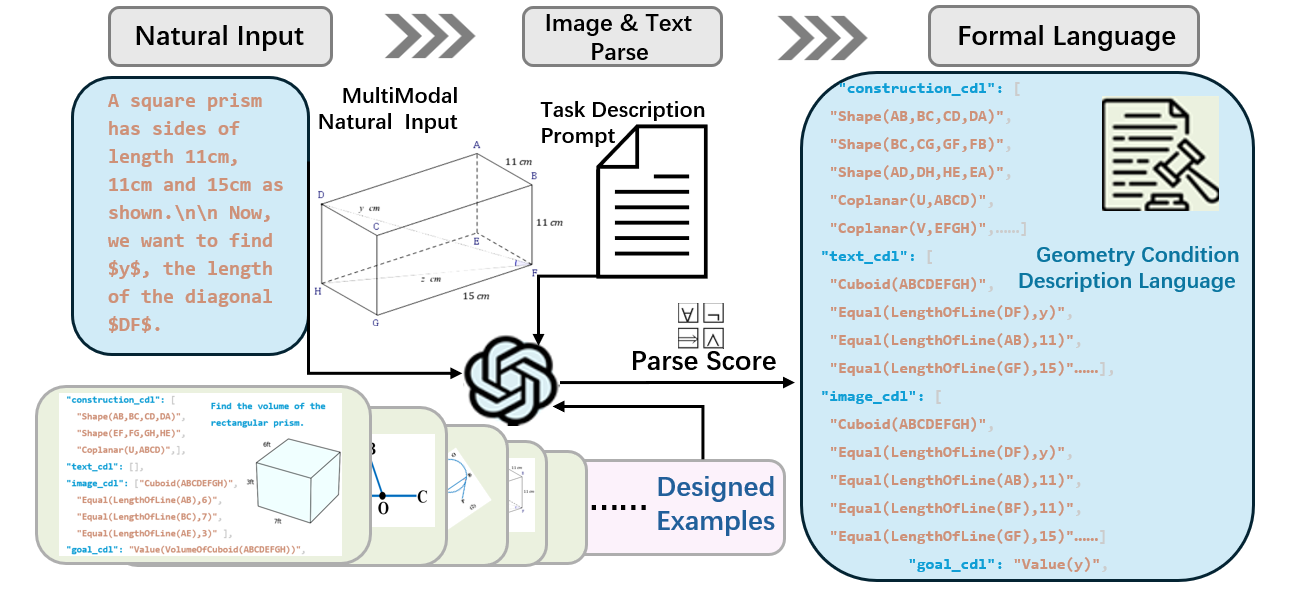}
    \caption{Overall process of parsing images and text into geometry condition description language (CDL)}
    \label{fig:M2f}
\end{figure*}
The formalization of natural language geometric descriptions into structured languages like CDL relies heavily on high-quality annotated datasets and benchmarks—serving as both the foundation for model guidance and the benchmark for performance evaluation. Among these, the formally annotated SolidFGeo2k, PlaneFGeo3k datasets are core to this work, providing standardized geometric content, multimodal resources, and CDL formalizations. Below, we first detail these datasets.Both datasets are manually annotated by domain experts to ensure accuracy and consistency, covering complementary geometric scenarios (solid and plane) and providing a comprehensive basis for parsing tasks.
\subsection{Dataset Collection}
SolidFGeo2k and PlaneFGeo3k are constructed to focus on geometric problem solving, covering two core domains of geometric reasoning. SolidFGeo2k comprises 1,908 stereometric problems centered on geometric solution derivation, while PlaneFGeo3k consists of 3,022 plane geometric problems. Both datasets adhere to a unified raw content collection framework, capturing three essential components for each problem.

First, problem descriptions are collected as structured natural language text, precisely conveying geometric configurations, given conditions, and solution targets. These descriptions are sourced from other datasets and renowned educational~\cite{chen2021geoqa} and research resources~\cite{wang2025mv}, ensuring alignment with practical geometric reasoning needs and encompassing a variety of distinct problem types~\cite{pittalis2010types}. Second, geometric images are paired with each problem description, visually representing the spatial relationships, shape compositions, and key elements of the geometric scenario~\cite{zhao2025pigps}. Third, standard answers are compiled for each problem, providing numerical results (e.g., volume, area, length, angle) or definitive descriptive conclusions (e.g., spatial relation confirmations) derived from standard geometric solution processes. These answers serve as the ground truth for validating problem-solving correctness.
The collection process prioritizes authenticity and diversity, ensuring the raw content reflects real-world geometric problem characteristics. More details are provided in Appendix~\ref{app:b}.
\subsection{Manual Annotation}
To transform raw problem content into actionable resources for geometric formalization and model training,datasets undergo systematic manual annotation by experts.All experts have experience in formal mathematical annotation. The annotation work follows standardized protocols and leverages professional tools.
Formal language encoding is implemented to translate the annotated information into geometric description language (CDL). This encoding unifies text and image semantics into a standardized structured format. The encoding process adheres to strict syntax rules to maintain consistency across the dataset.
To ensure annotation quality, a two-stage validation mechanism is adopted: 20\% of entries are randomly selected for cross-validation by an independent expert team, and inter-annotator agreement is measured using Cohen’s Kappa~\cite{cohen1960coefficient}. SolidFGeo2k achieves a Kappa score of 0.89, while PlaneFGeo3k reaches 0.91—both indicating "almost perfect" consistency . Final adjustments are made to resolve discrepancies, ensuring the annotations meet high standards of accuracy and reliability for downstream geometric reasoning tasks.

%% file: sec/5.tex
\section{Parse2Reasoning Method}
\subsection{MultiModal Formalization Parser (M2FP)}
\label{pa}
\begin{figure*}
    \centering
    \includegraphics[width=1\linewidth]{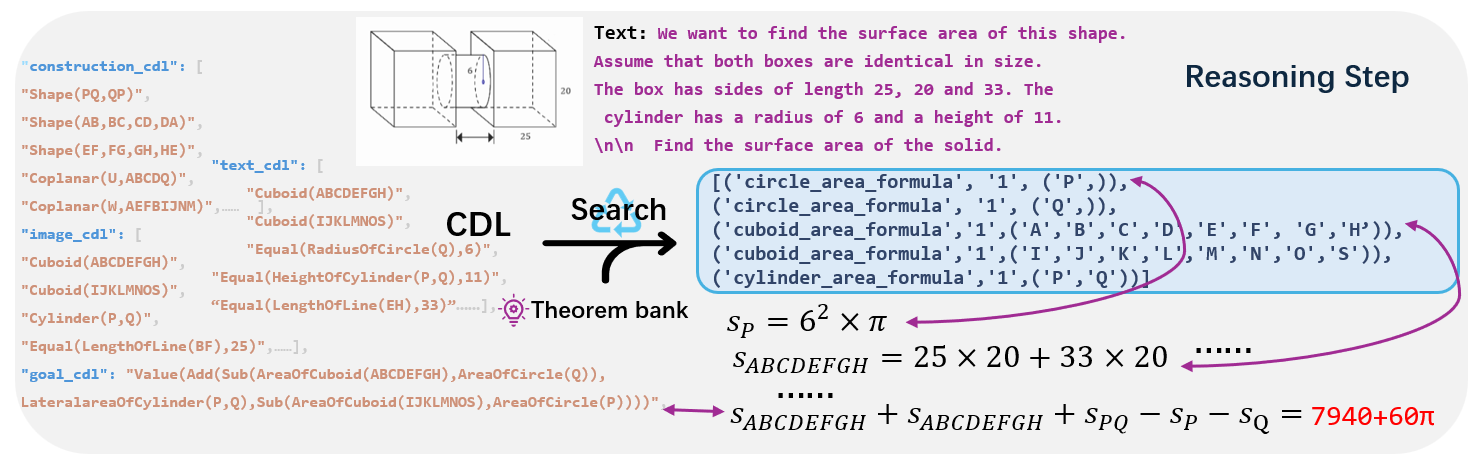}
    \caption{Reasoning using geometric condition description language (CDL)}
    \label{fig:r}
\end{figure*}
The translation of natural language geometric descriptions into formal languages is a foundational step in enabling automated geometric reasoning, as structured formalizations allow for precise computation of spatial relationships, property deductions, and theorem verification. This process involves two core stages: parsing (the autoformalization process of converting natural language to formal language) and reasoning (leveraging formal language for inference). To ensure robustness, both stages require rigorous evaluation, with parsing quality directly impacting downstream reasoning performance. Below, we outline the pipeline structure, evaluation methods, and experimental design to assess this workflow. More details in Appendix~\ref{app:c1}.

As shown in Fig.~\ref{fig:M2f}, this process starts with a targeted prompt engineering approach: specialized prompts are constructed using expertly designed example sets that cover all basic predicates. These examples can express predicates and formal language without involving complex geometric relationships, and each includes geometric descriptions and their corresponding formalizations. The model is guided to parse unseen geometric content into formal language using known predicates and formalized language, with these examples serving as structural and logical references.These examples will be presented to MLLMs along with a task description prompt of several hundred words to obtain the formal language (i.e., Condition Description Language).
%\subsection{Evaluation Method for Parsing Performance}
%To evaluate parsing efficacy, we employ the F1 score—a metric that balances precision (proportion of correctly generated CDL elements among all outputs) and recall (proportion of ground-truth CDL elements captured by the outputs)—to assess the alignment between generated CDL and dataset annotations. This quantifies both syntactic fidelity (adherence to CDL syntax rules) and semantic proximity (consistency with geometric intent). F1 scores are computed across all examples, enabling performance comparison of parsing under different in-context guidance scales.More details in Appendix C.2.
%Following evaluation, the parsed CDL outputs—filtered for sufficient F1 scores to ensure baseline quality—are fed into a downstream geometric reasoning module. This module leverages the formalized CDL language to perform tasks such as theorem verification, spatial relationship inference, and geometric property deduction, building on the structured representations generated during parsing.

%% file: sec/6.tex
\subsection{Solid Geometry Reasoning Engine (SGRE)}
\label{pp}
Building on the Multimodal Formalzation Parser, we introduce the Solid Geometry Reasoning Engine (SGRE), which is a dedicated module that drives inference using a comprehensive library of solid geometry predicates and theorems. SGRE focuses on transforming parsed CDL representations into actionable solutions through rigorous symbolic deduction, addressing the core challenge of multi-step geometric reasoning. Theoretically, provided that the problem itself is valid and the upstream parsing is accurate, a sound search mechanism will invariably yield a correct and verifiable solution process (See Fig.~\ref{fig:r}).

\begin{algorithm}
\caption{Theorems Search and Verification}
\begin{algorithmic}
\Require tree: a tree with the known problem conditions as the node.
\Ensure theorem\_seqs: list of theorem sequence for solving.
\State Initialize a list \textit{theorem\_seqs}
\State $node \gets tree.\text{get\_expandable}()$
\While{$node$ is not $\text{None}$}
    \State $solved \gets node.\text{apply\_theorem}()$
    \If{$solved$}
        \State $node.\text{state} \gets \text{SOLVED}$
        \State $\textit{theorem\_seqs} \gets node.\text{get\_theorem\_seqs}()$
        \State \textbf{break}
    \EndIf
    \State $node.\text{state} \gets \text{EXPANDED}$
    \State $l \gets node.\text{expand}()$
    \If{$l = 0$}
        \State $node.\text{state} \gets \text{UNSOLVED}$
    \EndIf
    \State $node \gets tree.\text{get\_expandable}()$
 \EndWhile
\end{algorithmic}
\label{algorithm1}
\end{algorithm}
%At its core, SGRE operates on a search framework rooted in a specialized solid geometry knowledge base. This knowledge base comprises two key components: a predicate library encoding 3D-specific spatial relations , shape attributes , and entity definitions; and a theorem library containing formalized geometric laws. 

Starting from the CDL output of the parser, SGRE initializes a node-based reasoning tree.(See Algorithm \ref{algorithm1}) Each node represents a geometric state: expandable nodes correspond to unresolved sub-problems, solved nodes mark completed sub-goals, and failed nodes indicate invalid theorem applications. The engine iteratively matches expandable nodes against applicable theorems from the library, applying logical substitutions and calculations to generate new states,and computes the final result through chained deduction. 

Since SGRE addresses a NP problem, most failed cases arise from combinatorial explosion, which prevents finding a goal-satisfying solution within a reasonable time. Notably, over 80\% of solvable problems can be resolved within 1.2 seconds and 57 reasoning steps. The majority of unsolved cases stem from erroneous CDL outputs or inherent flaws in the problem itself, thereby enabling SGRE to also assess the validity of problem premises and solutions.

By evaluation on the SolidFGeo2k dataset, SGRE achieves an 78\% accuracy rate across complex tasks, validating the effectiveness of its predicate-theorem framework. By grounding inference entirely in structured solid geometric knowledge, SGRE ensures traceable, human-aligned reasoning, complementing the parser’s role in converting multimodal inputs to formal representations. Together, these components form a complete pipeline for automated solid geometry problem-solving. More details are in Appendix~\ref{app:d}.

%% file: sec/7.tex
\section{Experiments}
To validate the efficacy of the Hilbert-Geo framework in facilitating solid geometry reasoning and quantify its superior performance compared to state-of-the-art multimodal large language models (MLLMs), we have devised a suite of comprehensive experiments. In this section, we first elaborate on the experimental design and setup, followed by a detailed presentation of the quantitative results pertaining to both the parsing phase and the reasoning phase. Subsequently, we conducted an in-depth analysis of the empirical findings to elucidate the underlying mechanisms and practical implications. More details are provided in Appendix~\ref{app:e}.
%(This section discusses the SolidFGeo2k dataset; for more details in other datasets, see Appendix~\ref{app:e})
\subsection{Experimental Setting}
\textbf{Parsing step.} As shown in Fig.~\ref{fig:M2f}, the parsing step, as the core perceptual component of the Hilbert-Geo framework, is subjected to independent performance evaluation to decouple its intrinsic effectiveness from the interference of the subsequent reasoning step. To ensure the comprehensiveness and representativeness of the evaluation, we conducted systematic testing in five closed-source models and four open-source models, with four gradient sample quantities (15, 25, 35, and 45 samples respectively). In the parsing steps of Section \ref{pa}, these samples were sampled from expertly designed example sets that cover diverse task scenarios and data distributions.\\
\textbf{Reasoning step.} To comprehensively assess the performance of the Hilbert-Geo framework, we conduct five rounds of evaluations on a diverse set of multimodal large language models (MLLMs) across the benchmark datasets SolidFGeo2k, PlaneFGeo3k, and Mathverse-Solid. This evaluation covers 5 open-source and 4 closed-source models, ensuring a thorough comparison across different model types and access paradigms.

To establish a human performance baseline, we recruited 50 high school students to independently complete the closed-book questions. Furthermore, we feed the parsing results derived from Section~\ref{pp} into the Solid Geometry Reasoning Engine, which characterize the models’ perceptual ability to convert geometric inputs into formal CDL. Here, we assess the success rate of geometric reasoning, measuring how effectively each model executes deductive processes based on the parsed formal structures.
\subsection{Experimental Results}
%\subsubsection{Reasoning Performance}
\begin{figure}[ht]
    \centering
    \includegraphics[width=1\linewidth]{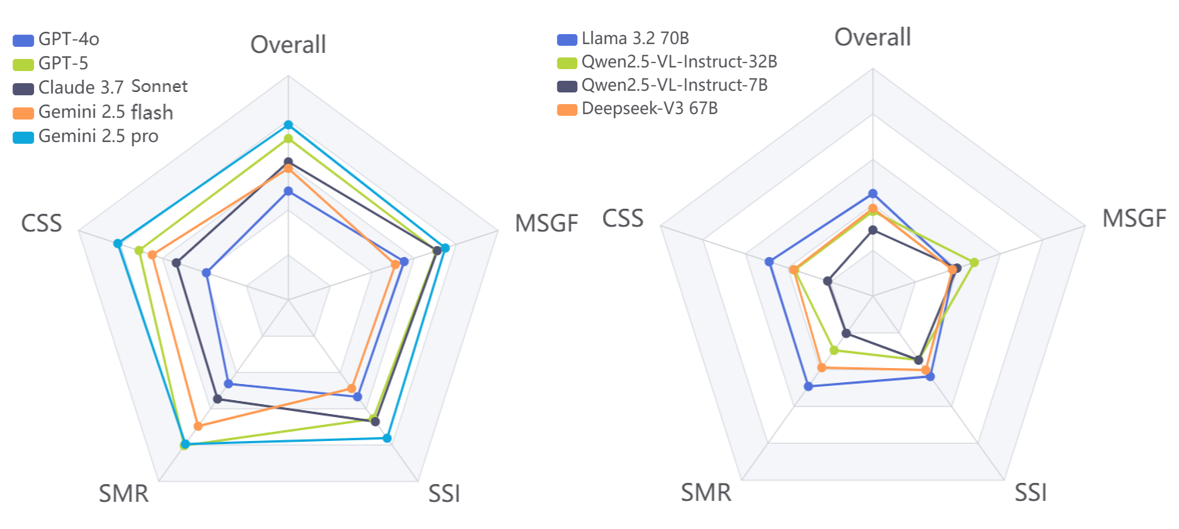}
    \caption{Performances for Different Subjects (CSS, SMR, SSI, MSGF, See Table \ref{tab:model_performance_trimmed}) based on Hilbert-Geo}
    \label{fig:lr}
\end{figure}
\begin{table}[ht]
\centering
\caption{MLLMs and Hilbert-Geo (Gemini-2.5-pro 45 samples) performances on SolidFGeo2k; Four fine grains: Composite Solid Structures (CSS), Spatial Metric Relations (SMR), Solid Shape Identification (SSI),  Measurement of Solid Geometric Forms (MSGF); The underline represents the best performance of MLLMs}
\resizebox{\linewidth}{!}{%
\begin{tabular}{l|ccccc}

\hline
\makecell{Model} & Overall.Avg & CSS & SMR & SSI & MSGF \\ \hline
\multicolumn{6}{c}{ Closed-source MLLMs} \\ 
\hline
\makecell{GPT-4o\cite{openai2024gpt4o}} & 35.8 & 25.9 & 27.8 & 39.2 & 40.0\\
\makecell{GPT-5\cite{openai2025gpt5}} & 50.6 & 51.2 & 55.4 & 40.6 & 46.8 \\ 
\makecell{Claude 3.7 Sonnet\cite{anthropic2025claude37}} & 47.1 & 39.2 & 40.4 & \underline{52.9} & \underline{56.8} \\
\makecell{Gemini 2.5 Flash\cite{google2025gemini25}} & 39.8 & 50.7 & 55.4 & 34.6 & 36.8 \\
\makecell{Gemini 2.5 Pro\cite{google2025gemini25}} & \underline{54.2} & \underline{60.2} & \underline{62.4} & 48.2 & 49.8 \\ \hline
\multicolumn{6}{c}{Open-source MLLMs} \\ \hline
\makecell{Llama 3.3 70B\cite{meta2025llama3.3}} & 33.6 & 36.3 & 34.4 & 31.1 & 28.7 \\
\makecell{Qwen2.5-VL-Instruct-32B\cite{bai2025qwen}} & 29.8 & 27.2 & 19.9 & 35.2 & 38.1 \\
\makecell{Qwen2.5-VL-Instruct-7B\cite{bai2025qwen}} & 20.2 & 17.7 & 10.7 & 25.2 & 30.1 \\
\makecell{Deepseek-V3 67B\cite{deepseek2024v3}} & 30.1 & 11.7 & 11.3 & 28.6 & 27.8 \\ \hline
\multicolumn{6}{c}{Human Performances} \\ \hline
\makecell{Human} & 81.8 & 84.3 & 81.2 & 86.1 & 78.7 \\ \hline
\multicolumn{6}{c}{Hilbert-Geo} \\ \hline
\makecell{Hilbert-Geo (Ground-Truth) (Ours)} & 78.7 & 80.5 & 84.1 & 76.3 & 75.1\\
\makecell{Hilbert-Geo (Gemini-2.5-Pro) (Ours)} & 77.3 & 80.3 & 79.4 & 76.2 & 74.8 \\
\hline
\end{tabular}%
}
\label{tab:model_performance_trimmed}
\end{table}

The findings in Table \ref{tab:model_performance_trimmed} underscore the inherent challenges of Solid Geometry. Gemini 2.5 Pro emerges as the top-performing model with an overall accuracy of 54.2\%, followed by GPT-5 at 50.6\% and Claude-3.7-Sonnet at 47.1\%. Notably, all other models score below 40\%, highlighting the significant hurdles that solid geometry reasoning presents even for cutting-edge large language models. 
\begin{figure}[ht]
    \centering
    \includegraphics[width=0.93\linewidth]{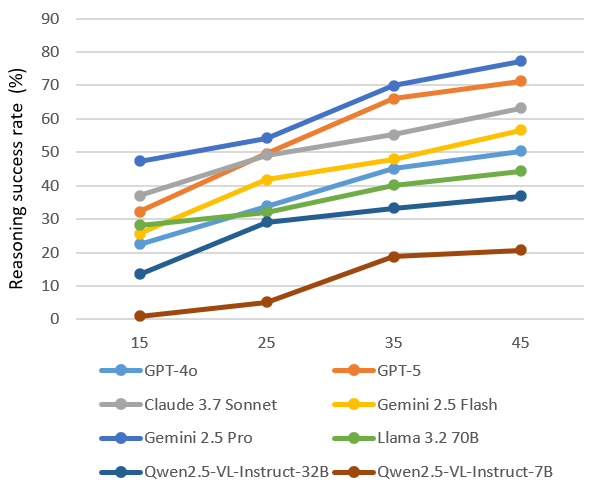}
    \caption{Reasoning performance of MLLMs under different numbers of samples based on Hilbert-Geo}
    \label{fig:tend}
\end{figure}

Across fine-grained tasks that demand robust reasoning, including Composite Solid Structures (CSS), Spatial Metric Relations (SMR), Solid Shape Identification (SSI), and Measurement of Solid Geometric Forms (MSGF), notable performance gaps persist among leading models. Gemini 2.5 Pro takes the lead in SMR with a score of 62.4\% but lags behind in SSI, achieving only 48.2\%. Open-source models, with Llama 3.3 70B as a representative example (scoring 36.3\% in CSS and 28.7\% in MSGF), consistently underperform compared to their closed-source counterparts. Critically, all these models remain significantly behind human performance, highlighting the current limitations in replicating human-level reasoning in solid geometry tasks.
Hilbert-Geo, however, presents a different scenario (See Fig.~\ref{fig:lr}). When evaluated on Hilbert (using 45 samples), Hilbert-Geo achieves the highest performance across all key dimensions: 80.3\% in CSS, 79.4\% in SMR, 76.2\% in SSI, 74.8\% in MSGF, and an overall score of 77.3\%. It outperforms leading models like Gemini 2.5 Pro and approaches human-level performance.
%Hilbert-Geo, however, presents a different scenario (See in Fig. \ref{fig:lr}). When evaluated on Hilbert (using 45 samples), Gemini 2.5 Pro delivers near-human performance across three key dimensions: CSS at 81.3\%, SMR at 79.4\%, and MSGF at 74.8/\%.

\subsection{Ablation study}
\subsubsection{Parsing Performance}
\label{subsec:fuzzy_metrics}
\begin{figure}[ht]
    \centering
    \includegraphics[width=1\linewidth]{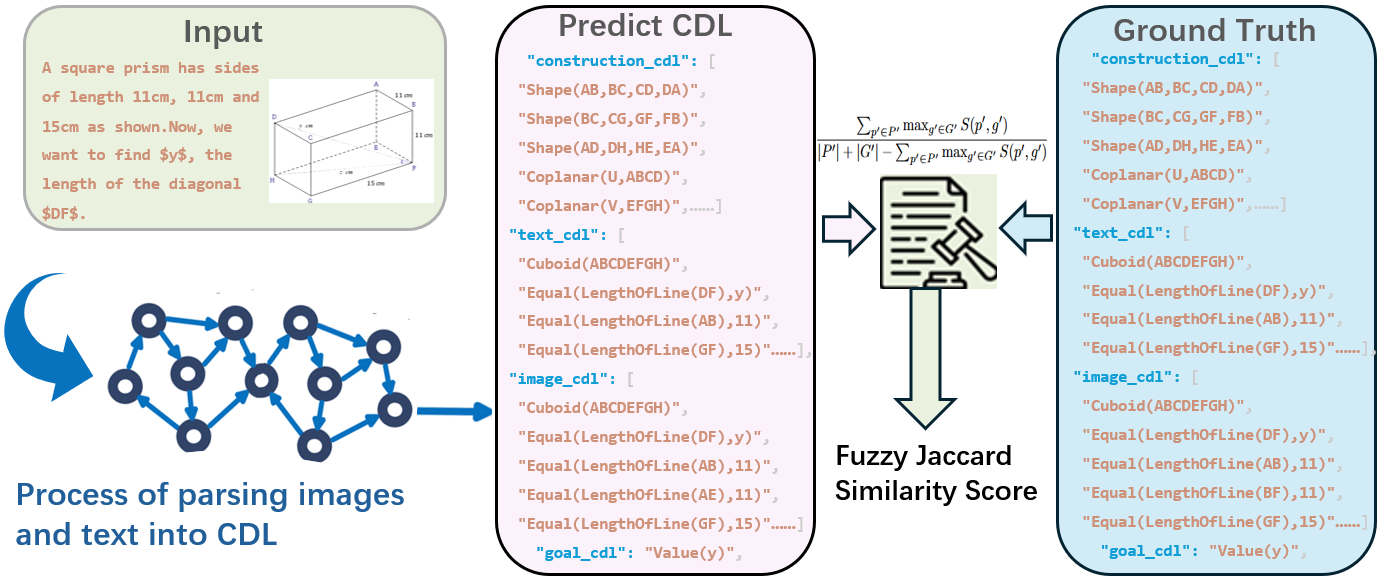}
    \caption{Fuzzy Jaccard Similarity Score between Predicted CDL and Ground Truth}
    \label{fig:fuzzy}
\end{figure}
As shown in Fig.~\ref{fig:fuzzy}, for the quantitative assessment of parsing performance, we adopted the Fuzzy Jaccard Similarity as the core accuracy metric. This metric introduces a fuzzy matching mechanism: it first computes the matching scores between the predict CDL results and the ground truth across multiple dimensions, then delineates the fuzzy intersection and fuzzy union based on these continuous matching scores, and finally quantifies the similarity between the two sets by calculating the ratio of the fuzzy intersection to the fuzzy union:\\
\begin{equation}
\text{Score} = \frac{\sum_{p' \in P'} \max_{g' \in G'} S(p', g')}{|P'|+|G'| - \sum_{p' \in P'} \max_{g' \in G'} S(p', g')}
 \label{}
\end{equation}\\
$P'$ is the normalized predicted CDL set; $G'$ is the normalized ground truth CDL set; $p'\in P'$; $g'\in G'$; $S(p', g')$ is the fuzzy matching scoring function; More details in Appendix~\ref{app:c2}.
\begin{figure}[ht]
    \centering
    \includegraphics[width=1\linewidth]{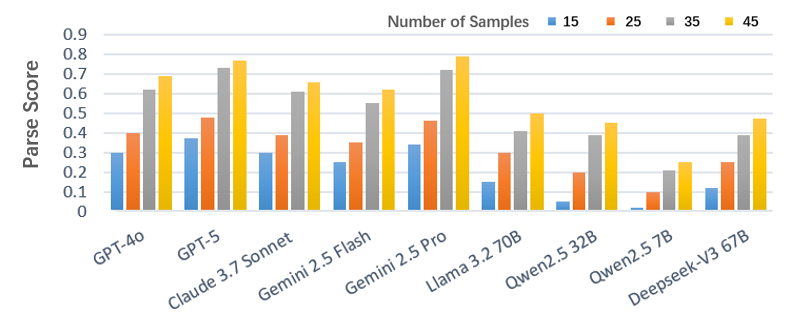}
    \caption{The parsing scores of various MLLMs under different sample quantities (More datasets' results in Appendix~\ref{app:c3}.)}
    \label{fig:parse}
\end{figure}

As shown in Fig.~\ref{fig:parse}, the experimental results validate that the parsing step of Hilbert-Geo can effectively distinguish the performance of different MLLMs, with GPT-5 and Gemini 2.5 Pro standing out as the most effective models for the parsing phase. Overall, they demonstrate the most robust performance, maintaining a score above 0.7 with 45 samples, which indicates their strong capability in parsing images and text into CDL within the formal framework. This lays a solid foundation for the subsequent reasoning step in the Hilbert-Geo framework, as a high-quality parsing output is crucial for accurate geometry reasoning.

\subsubsection{Impact of Various Models and Number of Samples}

\begin{table}[ht]
\centering
\caption{Performance of Different Models on Hilbert-Geo (45 Samples): Accuracy, Average Time per Solved Problem, and Reasoning Steps; The underline represents the best performance of Hilbert-Geo}
\resizebox{1\linewidth}{!}{%
\begin{tabular}{l|c c ccc|c|cc}
\hline
Model & Overall.Avg & CSS & SMR & SSI & MSGF & Avg.time & Avg.steps \\ \hline
\multicolumn{8}{c}{Closed-source MLLMs} \\ \hline
Hilbert-Geo (GPT-4o)\cite{openai2024gpt4o} & 50.3 & 39.1 & 46.2 & 53.4 & 55.2 & 38.1 & 112.6 \\
Hilbert-Geo (GPT-5)\cite{openai2025gpt5} & 71.2 & 71.2 & \underline{80.3} & 65.5 & 70.7 & 77.5 & 152 \\
Hilbert-Geo (Claude 3.7 Sonnet)\cite{anthropic2025claude37} & 63.2 & 53.4 & 54.6 & 67.1 & 71.0 & 50.0 & 129.2\\
Hilbert-Geo (Gemini 2.5 Flash)\cite{google2025gemini25} & 56.5 & 64.9 & 69.6 & 48.8 & 51.0 & 47.1 & 117.9 \\
Hilbert-Geo (Gemini 2.5 Pro)\cite{google2025gemini25} & \underline{77.3} & \underline{80.3} & 79.4 & \underline{76.2} & \underline{74.8} & \underline{81.2} & \underline{159.1} \\ \hline
\multicolumn{8}{c}{Open-source MLLMs} \\ \hline
Hilbert-Geo (Llama 3.3 70B)\cite{meta2025llama3.3} & 44.3 & 48.8 & 49.1 & 43.7 & 38.3 & 28.0 & 101.1 \\
Hilbert-Geo (Qwen2.5-VL-Instruct-32B)\cite{bai2025qwen} & 36.9 & 36.8 & 29.5 & 34.8 & 47.7 & 14.2 & 76.9  \\
Hilbert-Geo ((Qwen2.5-VL-Instruct-7B)\cite{bai2025qwen} & 30.7 & 21.3 & 20.3 & 34.8 & 39.6 & 2.9 & 32.0  \\
Hilbert-Geo ((Deepseek-V3 67B)\cite{deepseek2024v3} & 38.8 & 37.3 & 38.9 & 40.2 & 37.4 & 22.9 & 90.3 \\ \hline
\multicolumn{8}{c}{Formal Language Ground Truth} \\ \hline
Hilbert-Geo (CDL) & 78.7 & 80.5 & 84.1 & 76.3 & 75.1 & 88.0 & 174.6 \\ \hline
\end{tabular}%
}
\label{tab:model_performance_solidfgeo_avg_front}
\end{table}
%\textbf{Formal Framework Hilbert-Geo Performances}
\begin{figure}[ht]
    \centering
    \includegraphics[width=1\linewidth]{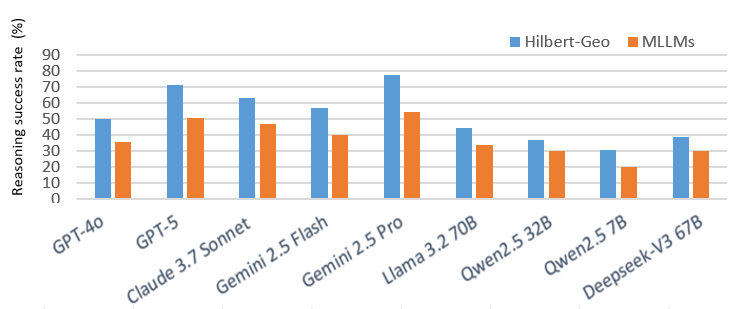}
    \caption{Comparison of Reasoning Performances between Hilbert-Geo (45 samples) and MLLMs}
    \label{fig:placeholder2}
\end{figure}

As shown in Table \ref{tab:model_performance_solidfgeo_avg_front} and Table \ref{tab:model_performance_trimmed}, it achieves a remarkable qualitative breakthrough in reasoning, boasting an average reasoning accuracy of 77.3\% in datasets with a scale 1.5 times that of pure MLLMs (See Fig.~\ref{fig:placeholder2}). Even with the smallest sample size of 15 examples, its accuracy in SolidFGeo2k reaches 42.1\% (See Fig.~\ref{fig:tend}), surpassing Deepseek-V3 67B’s peak performance.

Across pivotal benchmarks, Hilbert-Geo attains 77.3\% accuracy on SolidFGeo2k and consistently outperforms state-of-the-art MLLMs and specialized solvers. This superiority originates from two core strengths of the Hilbert-Geo framework: firstly, the CDL effectively encapsulates both the intrinsic geometric properties and the problem-specific conditions, while the SGRE conducts rigorous formal reasoning to derive the final solution. This symbol-driven approach ensures the reasoning process is rigorous and logically coherent; Secondly, the framework effectively steers MLLMs’ geometric intuition during the parsing stage, guaranteeing that initial geometric conditions are unambiguous, precise, and devoid of hallucinations.

Additionally, from Fig.~\ref{fig:tend}, we can observe that models with better performance have longer inference times and more steps. This is because the CDL complexity after parsing difficult questions is high, while the search in the inference process is fixed (always exponential complexity). Therefore, the average time and the average steps also indirectly indicate the parsing ability of the model.

\subsubsection{Error Analysis}

\begin{figure}[htbp] % 位置参数：优先左栏内排版
    \centering
    % 左子图：占左栏宽度的 ~45%
    \begin{subfigure}{0.48\linewidth} % \linewidth 为当前栏宽（左栏）
        \centering
        \includegraphics[width=\textwidth]{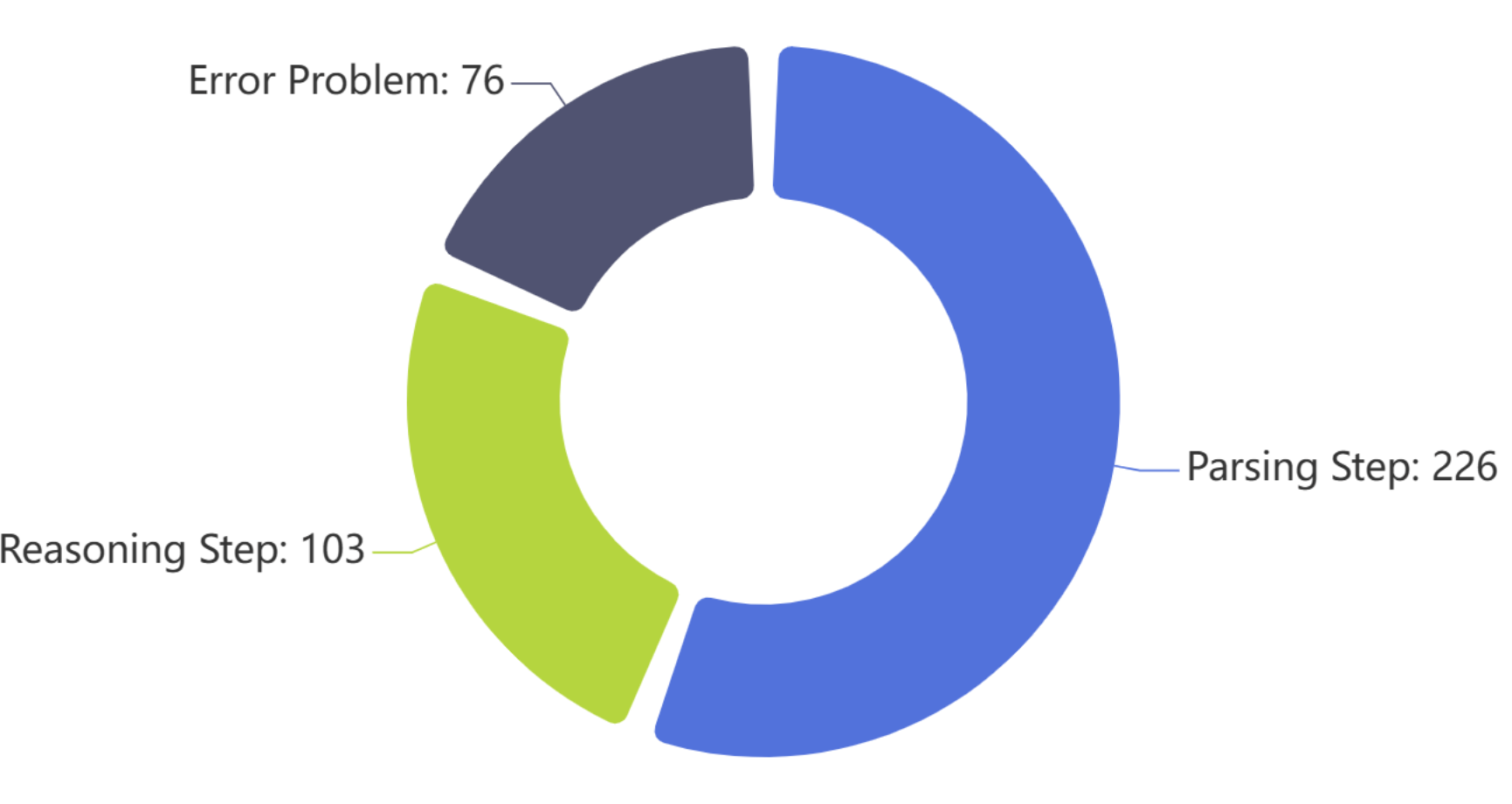} % 充满子图宽度
        \caption{Gemini 2.5 Pro}
        \label{fig:left}
    \end{subfigure}
    \hfill % 两图间自动填充空白
    % 右子图：占左栏宽度的 ~45%
    \begin{subfigure}{0.5\linewidth}
        \centering
        \includegraphics[width=\textwidth]{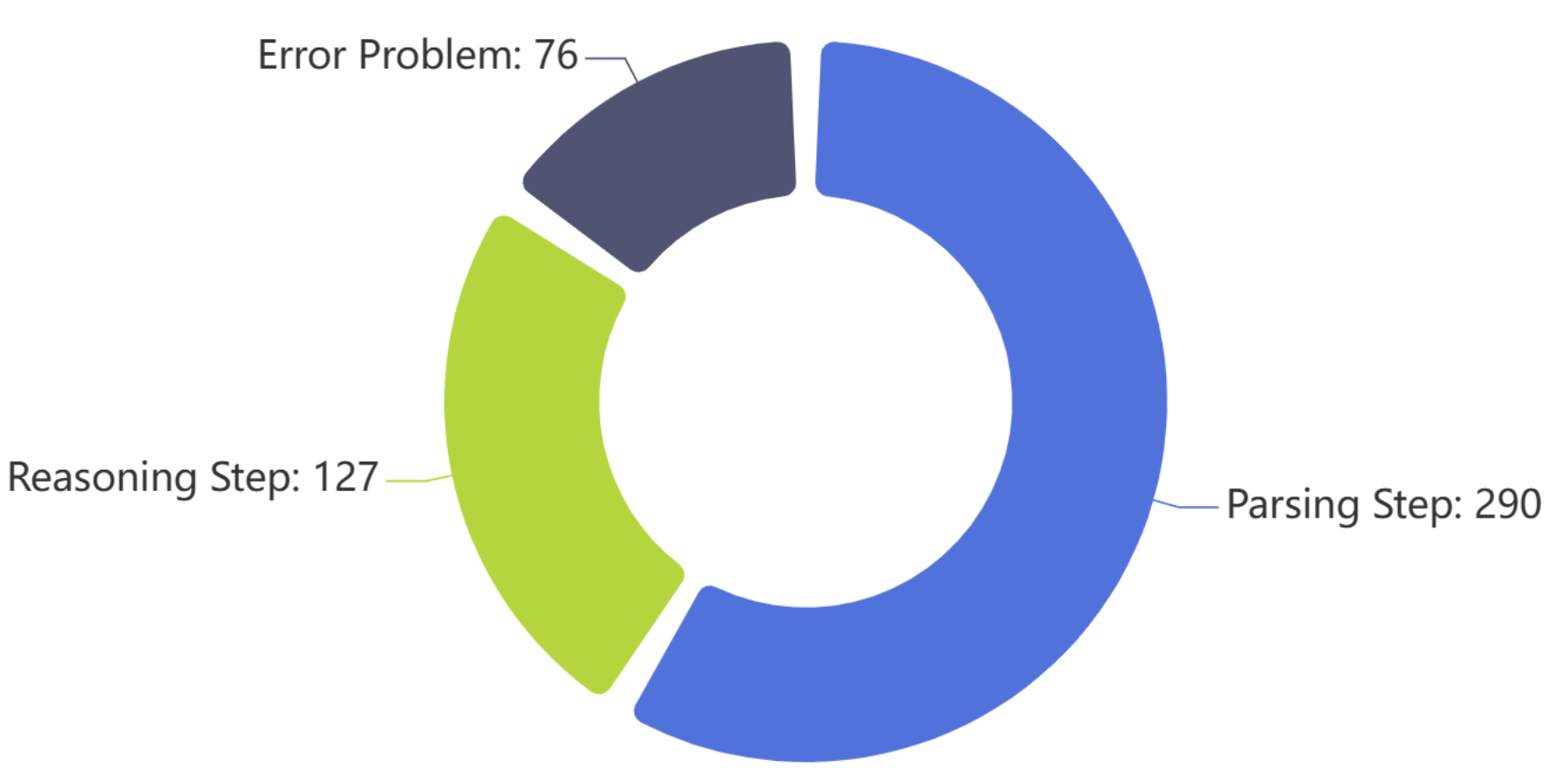}
        \caption{GPT-5}
        \label{fig:right}
    \end{subfigure}
    \caption{ Error distribution of Hilbert-Geo (Gemini 2.5 pro and GPT-5 with 45 samples)}
    \label{fig:two-in-left2}
\label{error1111}
\end{figure}
\begin{figure}
    \centering
    \includegraphics[width=1\linewidth]{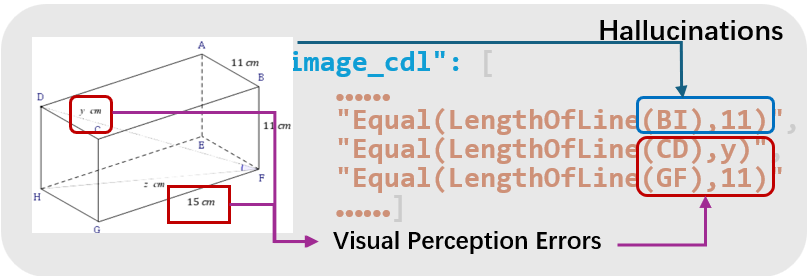}
    \caption{example of hallucination and Visual Perception Error}
    \label{fig:placeholdererror}
\end{figure}
As illustrated in Fig.~\ref{error1111}, the errors occurring in Hilbert-Geo primarily stem from three key sources: CDL distortion during the parsing phase, combinatorial explosion induced by excessively complex problem combinations in the reasoning phase, and inherent defects in the problem conditions themselves.
For CDL distortion in the parsing phase, we performed two targeted analyses on CDL, focusing on hallucinations and Visual Perception Errors (See Fig.~\ref{fig:placeholdererror}). The results indicated a substantial reduction in both the incidence of hallucinations and Visual Perception Errors, which fully demonstrates that the parsing phase of Hilbert-Geo can effectively alleviate these two types of issues.
In terms of the reasoning phase, we adjusted the time limit setting by removing the original 300-second constraint, we tested 103 problems that were prone to timeouts under the former setting; 87\% of these problems were successfully solved within the tested time frame, offering support for the completeness of Hilbert-Geo’s knowledge base. Theoretically, Hilbert-Geo exhibits no calculation and reasoning errors throughout the reasoning process.
%Regarding inherent defects in the problem conditions, we identified 76 problematic cases by examining the "Unsolved" category. For each of these cases with flawed conditions, we either implemented necessary corrections to rectify the defects or removed the problematic conditions entirely, thereby further improving the overall reliability of Hilbert-Geo.

%As shown in figure \ref{error1111}, Hilbert-Geo errors are mainly caused by CDL distortion during the parsing phase, combinatorial explosion from overly complex problem combinations in the reasoning phase, or inherent issues with the problem conditions themselves.\\
%For CDL distortion in parsing, we conducted two types of analysis on CDL: hallucinations and Visual Perception Errors. The results showed a significant reduction in both hallucination types and Visual Perception Errors, indicating that the parsing phase effectively mitigates these two issues.\\
%Regarding the reasoning phase, we removed the original 300-second time limit for reasoning (which previously led to timeout issues) and adopted an unlimited timeout setting. Testing on 103 timeout-prone problems revealed that 87\% of them were solved within the tested time frame, demonstrating the completeness of Hilbert-Geo’s knowledge base.Hilbert-Geo will not have any computational errors.\\
%For inherent problems with the problem conditions, we identified 76 problematic cases through the "Unsolved" category, and each of these flawed conditions was either corrected or removed.

%% file: sec/9.tex
\section{Conclusion}
In this paper, we introduce Hilbert-Geo, the first unified solid geometric formal framework to tackle the unique challenges of automated solid geometric reasoning, aiming to address a long-standing gap in AI systems’ ability to handle spatial logic and multimodal 3D diagram interpretation. By integrating a multimodal formalization parser and a solid geometry reasoning engine, which are tailored to capture 3D entity relationships and mitigate spatial hallucinations common in MLLMs, the framework fills this critical gap. Extensive experiments conducted on our curated SolidFGeo2k, PlaneFGeo3k, and MathVerse-solid datasets validate that Hilbert-Geo outperforms leading MLLMs, as it can effectively alleviate reasoning hallucinations, ensure logical consistency, and guide geometric intuition through formalized representations. Additionally, the framework targets the limited capability of existing methods in addressing vector-related tasks and problems requiring sophisticated auxiliary line constructions, as these gaps further underscore the need for a specialized approach. Furthermore, error analysis reveals that visual perception and logical reasoning deficits are the primary bottlenecks for current MLLMs in solid geometry, which highlights the value of Hilbert-Geo’s structured approach in mitigating these limitations.

%% file: sec/aA.tex
\clearpage
\setcounter{page}{1}
\maketitlesupplementary

\appendix
% 关键：配置附录计数器，避免 .1. 格式
\renewcommand{\thesection}{\Alph{section}} % 一级附录用大写字母（A、B、C...）
\renewcommand{\thesubsection}{\thesection.\arabic{subsection}} % 二级附录用 A.1、A.2...
%\renewcommand{\thetheorem}{\thesection.\arabic{theorem}} % 定理编号关联附录（A.1、A.2...）
%\renewcommand{\thedefinition}{\thesection.\arabic{definition}} % 定义编号关联附录
%\renewcommand{\thelemma}{\thesection.\arabic{lemma}} % 引理编号关联附录

% --- START OF APPENDIX CONTENT ---

% 一级附录：用 section（原 subsection 改为 section）
\section{Solid Geometry Formal Language}
\label{app:a}

% Define theorem environments（计数器关联 section，确保编号正确）
\newtheorem{theorem}{Theorem}[section]
\newtheorem{definition}{Definition}[section]
\newtheorem{lemma}{Lemma}[section]

% Define the custom operator for 3D composition
\newcommand{\oplusThreeD}{\oplus_{3D}}

% 二级附录：用 subsection（层级正确）
\subsection{Formal Geometry Representation}
\label{app:a1}
In the domain of solid geometry, simple geometric bodies serve as fundamental units that are combined into complex geometries via combinatorial strategies. This section provides a formal proof that the solid construction operation constitutes a commutative semigroup.

Previous work on Geometry Formalization Theory has shown that the formal representation of plane geometry is a semigroup (Section. \ref{gft}). In solid geometry, we elevate this representation to a set of faces. The plane geometric category is encapsulated as an entity, which serves as the foundational reference for constructing solid geometric models.

Let the formal representation of a simple polyhedron $\mathcal{P}$ be a set $R_P$ (Here, the polyhedron refers to a figure that is topologically homeomorphic to a $\mathbb{R}^3$ sphere):
\begin{equation}
    R_P = \{ f^{(P)}_1, f^{(P)}_2, \dots, f^{(P)}_m \}
\end{equation}
where each face $f^{(P)}_i$ is an ordered list of vertices (following the right-hand rule with outward normal vectors) describing the boundary of that face:
\begin{equation}
    f^{(P)}_i = (v_1, v_2, \dots, v_k)
\end{equation}

\paragraph{Definition of Composition Operation ($\oplusThreeD$):}
This is the foundation of the combined strategy, let polyhedra $A$ and $B$ be joined via a shared interface. Let $S_{A}$ denote the shared face in $A$, and $S_{B}$ denote the corresponding shared face in $B$ (note: geometrically, the vertex order of $S_{A}$ is the reverse of $S_{B}$).
The solid composition operation is defined as the set union minus the symmetric intersection (the internal contact faces), as shown in Eq. \ref{eq:composition_def}:
\begin{equation}
    \label{eq:composition_def}
    R_A \oplusThreeD R_B = (R_A \setminus \{S_{A}\}) \cup (R_B \setminus \{S_{B}\})
\end{equation}

\begin{theorem}
The formal representation of solid geometry is closed under the operation $\oplusThreeD$.
\end{theorem}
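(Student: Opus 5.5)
The plan is to check that $R_A \oplusThreeD R_B$ is again a set of faces of the form required of a formal representation. That means a finite set of ordered vertex lists, each oriented by the right-hand rule with outward normal, describing the boundary of a closed solid. The closure statement should be read relative to the class $\mathcal{R}$ of such face sets, with $\oplusThreeD$ defined only when $A$ and $B$ share an interface, i.e.\ $S_A \in R_A$, $S_B \in R_B$, and $S_B$ is the reversal of $S_A$. I will first fix this convention explicitly, since closure is only meaningful for admissible pairs.

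\textbf{Step 1: type-level closure.} By \eqref{eq:composition_def}, $R_A \setminus \{S_A\} \subseteq R_A$ and $R_B \setminus \{S_B\} \subseteq R_B$. Every element of the union is therefore an ordered vertex list inherited unchanged from $R_A$ or $R_B$. The union is finite, with at most $|R_A|+|R_B|-2$ faces, and each face keeps the orientation it had. So the result is a finite set of oriented faces, which is the syntactic part of closure.

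\textbf{Step 2: boundary closedness.} I will show the result still bounds a closed surface, using the standard combinatorial criterion: every directed edge $(v_i,v_{i+1})$ occurring in some face has its reverse occurring in exactly one other face.
\begin{itemize}
\item Edges not on $\partial S_A$ are paired within $R_A$ or within $R_B$ exactly as before, because neither partner face was removed.
\item For an edge $(u,w)$ of $S_A$: in $R_A$ its partner $(w,u)$ lay in some face $F_A \neq S_A$, which survives. In $R_B$, the edge $(u,w)$ appears in a face $F_B \neq S_B$, since $S_B$ contains $(w,u)$, and $F_B$ also survives.
\item Hence after removing $S_A$ and $S_B$, the edges of $F_A$ and $F_B$ pair with each other, and the pairing property is preserved.
\end{itemize}
Since every face retains its outward orientation from its original solid, and the glued solid's exterior is contained in the union of the two exteriors, outward normals remain outward.

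\textbf{Main obstacle.} The hard part is geometric rather than combinatorial. The glued object must still be a valid polyhedron: the interiors of $A$ and $B$ must not overlap, and the result should remain homeomorphic to a sphere. The sphere property follows because gluing two $3$-balls along a disk in their boundaries yields a $3$-ball, so the boundary is again $S^2$. Non-overlap, however, must be taken as part of the admissibility hypothesis; I would state it as an assumption of the composition.

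Coplanar adjacent faces may also appear after gluing. I would handle these by allowing the representation to contain coplanar neighbouring faces, or by noting that merging them is a normalization that does not change the represented solid. Either way the result lies in $\mathcal{R}$, which gives closure.
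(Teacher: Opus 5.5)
There is one genuine gap, in the admissibility hypothesis you settle on. Disjoint interiors is not enough. Every step you take actually needs $A\cap B=S$, where $S$ is the common face viewed as a point set; in particular $R_A$ and $R_B$ must share no vertex off the interface.

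To see why, place two U-shaped prisms mouth to mouth. Glue them along one pair of arm end faces $S_A,S_B$, and let them merely touch along the other pair $T_A,T_B$. Both are balls and their interiors are disjoint, yet $R_A\oplus_{3D}R_B$ still contains $T_A$ and $T_B$. This breaks three of your steps:
\begin{enumerate}
\item Each edge of $T$ now lies in four faces, so your claim that edges off $\partial S_A$ stay paired ``exactly as before'' fails.
\item The normals of $T_A$ and $T_B$ point into the other solid, so they are not outward.
\item $A\cup B$ is a solid torus, so the ball-gluing fact does not apply.
\end{enumerate}
If $B$ instead touches $A$ at one extra vertex, your directed-edge criterion still holds, but the link of that vertex is two disjoint cycles. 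So edge pairing alone does not certify a 2-manifold.

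Under $A\cap B=S$ your argument closes:
\begin{enumerate}
\item Edges off $\partial S_A$ belong to one solid only, so they stay paired.
\item Near each surviving face the other solid is absent, so its outward normal stays outward. This local fact, not the containment of exteriors you cite, is the right justification.
\item The link of a vertex of $\partial S_A$ is a path from $A$ and a path from $B$ joined into a single cycle.
\item $A\cup B=A\cup_S B$ is a 3-ball.
\end{enumerate}

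With that repair, your route differs from the paper's and is more complete. The paper argues in one line that gluing two closed manifolds along a simply connected face and deleting it leaves a closed 2-manifold, attributing this to a generalization of Euler's formula. It states no hypotheses, tacitly needs the same condition $A\cap B=S$, and quantifies over all pairs even though $\oplus_{3D}$ is only defined when a common interface exists. You instead check closure locally on the face lists, get the topological type from ball gluing, make the domain of $\oplus_{3D}$ explicit, and handle coplanar faces created by gluing. That is what a solver manipulating these lists can actually verify.

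The paper's Euler remark also fits your framework. For a $k$-gon interface, $(V_A+V_B-k)-(E_A+E_B-k)+(F_A+F_B-2)=2$. So once edge pairing and the vertex-link check give a connected closed oriented surface, $\chi=2$ forces $S^2$ without invoking ball gluing.
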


\begin{proof}
By definition, a valid object in the solid formal system is a "set of faces bounding a closed 3D space".
\begin{enumerate}
    \item Let $R_A$ and $R_B$ be face sets of two valid polyhedra.
    \item Execute the operation $R_{result} = R_A \oplusThreeD R_B$.
    \item This operation removes the internal contact faces ($S_A$ and $S_B$) and retains all external surfaces.
    \item According to the generalization of Euler's formula for manifolds, when two closed manifolds are glued along a simply connected face and that face is removed, the remaining surface still constitutes a closed 2-manifold (i.e., the boundary of the new polyhedron).
    \item Therefore, $R_{result}$ remains a set of faces describing a closed solid.
\end{enumerate}
\begin{equation}
    \forall R_A, R_B \in \mathbb{S}, \quad (R_A \oplusThreeD R_B) \in \mathbb{S}
\end{equation}
\end{proof}
\begin{theorem}
$R_A \oplusThreeD R_B = R_B \oplusThreeD R_A$.
\end{theorem}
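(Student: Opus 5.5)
The plan is to reduce the statement to the commutativity of set union, using the definition of $\oplusThreeD$ in Eq.~\ref{eq:composition_def}. The one subtlety is that the removed faces $S_A$ and $S_B$ are labelled relative to the \emph{order} of the operands. So I first make sure the shared interface is a symmetric datum of the pair $\{A,B\}$ and not of the ordered pair $(A,B)$.

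\textbf{Step 1: the interface data is symmetric.} When $A$ and $B$ are joined, the interface is an unordered pair of faces $\{S_A, S_B\}$ with $S_A\in R_A$, $S_B\in R_B$, and $S_B$ the vertex-reversal of $S_A$. When we form $R_B \oplusThreeD R_A$, the definition is applied with the roles exchanged. The face removed from the first operand $R_B$ is the shared face lying in $B$, which is $S_B$. The face removed from the second operand $R_A$ is the one lying in $A$, which is $S_A$. The reversal relation is itself symmetric: reversing $S_B$ gives back $S_A$. Hence the same pair of faces is deleted in both orders, and no new choice is involved.

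\textbf{Step 2: unfold and commute.} Substituting into Eq.~\ref{eq:composition_def} gives
\begin{equation}
R_B \oplusThreeD R_A = (R_B\setminus\{S_B\})\cup(R_A\setminus\{S_A\}).
\end{equation}
By commutativity of $\cup$ this equals
\begin{equation}
(R_A\setminus\{S_A\})\cup(R_B\setminus\{S_B\}) = R_A \oplusThreeD R_B.
\end{equation}
Both sides are sets of oriented faces, so the equality holds as sets, and this is exactly what equality of formal representations means. By the previous theorem (closure), both sides also lie in $\mathbb{S}$. The equation is therefore an identity in $\mathbb{S}$, not merely in the ambient power set.

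\textbf{Main obstacle.} The only real difficulty is Step 1, the well-definedness of the operation with respect to operand order. If a polyhedron had several candidate contact faces, $\oplusThreeD$ would depend on a gluing choice. My approach is to regard the gluing interface as part of the input, namely the unordered pair $\{S_A,S_B\}$, and to check that swapping the operands reuses that same datum. With this convention the argument is purely set-theoretic and needs no geometric input beyond what closure already supplies.
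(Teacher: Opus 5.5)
Your proposal is correct and matches the paper's proof: the paper also unfolds the definition in both orders, writing $R_B \oplus_{3D} R_A = (R_B\setminus\{S_B\})\cup(R_A\setminus\{S_A\})$, and concludes by commutativity of $\cup$. Your Step 1 makes explicit the convention the paper uses silently in that unfolding, namely that $S_A$ and $S_B$ are labelled by the solid they belong to rather than by operand position; your appeal to the closure theorem is harmless but not needed for the set equality.
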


\begin{proof}
Let $R_A$ contain $m$ faces and $R_B$ contain $n$ faces. Based on Eq. \ref{eq:composition_def}:
\begin{equation}
    \label{eq:comm_step1}
    R_A \oplusThreeD R_B = \{ f \mid f \in R_A \cup R_B, f \neq S_A, f \neq S_B \}
\end{equation}
Now consider the reverse operation $R_B \oplusThreeD R_A$:
\begin{equation}
    \label{eq:comm_step2}
    R_B \oplusThreeD R_A = (R_B \setminus \{S_{B}\}) \cup (R_A \setminus \{S_{A}\})
\end{equation}
According to set algebra, the union operation is commutative, i.e., $X \cup Y = Y \cup X$. Therefore:
\begin{equation}
    (R_A \setminus \{S_{A}\}) \cup (R_B \setminus \{S_{B}\}) = (R_B \setminus \{S_{B}\}) \cup (R_A \setminus \{S_{A}\})
\end{equation}
Substituting back into Eq. \ref{eq:comm_step1} and Eq. \ref{eq:comm_step2}, we obtain:
\begin{equation}
    R_A \oplusThreeD R_B = R_B \oplusThreeD R_A
\end{equation}
This indicates that the construction of solid figures is independent of the input order of components.
\end{proof}

\begin{theorem}
$(R_A \oplusThreeD R_B) \oplusThreeD R_C = R_A \oplusThreeD (R_B \oplusThreeD R_C)$.
\end{theorem}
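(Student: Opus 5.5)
The plan is to unfold Eq.~\ref{eq:composition_def} twice on each side and reduce both expressions to the same set of faces. We must first fix what the interfaces are in a triple composition. On the left, $R_A$ and $R_B$ are glued along the pair $(S_A, S_B)$. The resulting solid is then glued to $R_C$ along a pair $(S', S_C)$, where $S'$ is a face of $R_A \oplusThreeD R_B$. We adopt the natural convention, implicit in the paper's use of fixed interfaces, that distinct gluings use distinct faces. Concretely, the faces $S_A, S_B, S_C$ and the second interface face are pairwise distinct, and each gluing face survives the earlier removal step. Then $S'$ is an external face lying in $R_A$ or in $R_B$. By symmetry we treat the case in which $R_C$ is glued to $R_B$, so there are two interface pairs, $(S_A, S_B)$ and $(S_B', S_C)$ with $S_B' \in R_B$ and $S_B' \neq S_B$; the other case is identical with the roles relabelled.

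Under this convention the left-hand side is computed as follows. By Eq.~\ref{eq:composition_def},
\begin{equation*}
(R_A \oplusThreeD R_B) \oplusThreeD R_C = \big((R_A\setminus\{S_A\}) \cup (R_B\setminus\{S_B\})\big)\setminus\{S_B'\} \;\cup\; (R_C\setminus\{S_C\}).
\end{equation*}
Using $(X\cup Y)\setminus Z = (X\setminus Z)\cup(Y\setminus Z)$, together with $S_B'\notin R_A$, this equals
\begin{equation*}
(R_A\setminus\{S_A\}) \cup (R_B\setminus\{S_B,S_B'\}) \cup (R_C\setminus\{S_C\}).
\end{equation*}
The right-hand side is computed the same way. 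First, $R_B \oplusThreeD R_C = (R_B\setminus\{S_B'\})\cup(R_C\setminus\{S_C\})$. Then, gluing $R_A$ along $S_A$ and the surviving face $S_B$, we remove $S_B$ and obtain exactly the same three-term union. Associativity and commutativity of $\cup$ then close the argument. By the closure theorem above, every intermediate object lies in $\mathbb{S}$, so each application of $\oplusThreeD$ is legitimate.

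The main obstacle is this well-definedness step. We must show that the interface faces chosen on one bracketing are available, and are removed, on the other. This requires that each removed face is a face of only one of the components. If some face in $R_A$ coincided as a set element with one in $R_C$, the set difference could remove a face that ought to remain external. We will handle this by assuming face lists are labelled by their component, so the component sets are disjoint, or equivalently by requiring that the three solids meet only along the specified interfaces. Under that hypothesis the distributivity computation above is rigorous, and the theorem follows. Combined with the two preceding theorems, this makes $(\mathbb{S},\oplusThreeD)$ a commutative semigroup.
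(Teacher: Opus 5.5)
Your core computation is correct and is essentially the paper's own argument. The paper fixes interfaces $(S_{AB},S_{BA})$ between $A$ and $B$ and $(S_{BC},S_{CB})$ between $B$ and $C$. It unfolds both bracketings and reduces each to $(R_A\cup R_B\cup R_C)\setminus\{S_{AB},S_{BA},S_{BC},S_{CB}\}$. It justifies that simplification only by remarking that one pair's interface faces ``do not affect'' the third solid. Your three-term union $(R_A\setminus\{S_A\})\cup(R_B\setminus\{S_B,S_B'\})\cup(R_C\setminus\{S_C\})$ is the same set. Your distinctness and disjointness hypotheses are exactly what turns the paper's remark into a proof; beyond distinctness of the interface faces, only $S_B'\notin R_A$ and $S_B\notin R_C$ are actually used. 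Working with the per-component form of Eq.~\ref{eq:composition_def} is slightly cleaner than the paper's $(R_A\cup R_B)\setminus\{\cdot\}$ form.

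The step that fails is your claim that the case where $R_C$ is glued to a face $S_A'\in R_A$ is ``identical with the roles relabelled.'' In that configuration the left side is well defined. The right side $R_A\oplus_{3D}(R_B\oplus_{3D}R_C)$, however, first requires $R_B\oplus_{3D}R_C$, and $B$ and $C$ share no interface, so Eq.~\ref{eq:composition_def} does not define it. Relabelling so that the shared solid sits in the middle does not help. With $A$ in the middle, the chain-case identity you proved becomes $(R_B\oplus_{3D}R_A)\oplus_{3D}R_C=R_B\oplus_{3D}(R_A\oplus_{3D}R_C)$. Its left side matches the statement's by commutativity, but its right side is a different bracketing from the one in the statement.

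The repair is to do what the paper does: put the chain configuration ($A$--$B$ and $B$--$C$ sharing faces) into the hypotheses and drop the symmetry claim. This also shows that $\oplus_{3D}$ is a partial, interface-dependent operation rather than a total binary operation on $\mathbb{S}$. Your closing ``commutative semigroup'' sentence, which the paper also asserts, holds only with that qualification.
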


\begin{proof}
setup:
\begin{enumerate}
    \item Polyhedra $A$ and $B$ share interface faces $(S_{AB}, S_{BA})$.
    \item Polyhedra $B$ and $C$ share interface faces $(S_{BC}, S_{CB})$.
    \item $R_A, R_B, R_C$ are their respective face sets.
\end{enumerate}

Left Hand Side:
Let $R_{AB} = R_A \oplusThreeD R_B$.
\begin{equation}
    R_{AB} = (R_A \cup R_B) \setminus \{S_{AB}, S_{BA}\}
\end{equation}
Next, calculate $R_{AB} \oplusThreeD R_C$. The contact interface involves $B$ and $C$ (i.e., $S_{BC}$ and $S_{CB}$):
\begin{align}
    &(R_A \oplusThreeD R_B) \oplusThreeD R_C 
    = (R_{AB} \cup R_C) \setminus \{S_{BC}, S_{CB}\} \label{eq:assoc_left_step1} \\
    &\quad= \left( \left( R_A \cup R_B \right) \setminus \{S_{AB}, S_{BA}\} \cup R_C \right) \setminus \{S_{BC}, S_{CB}\} \label{eq:assoc_left_step2}
\end{align}
Since $S_{AB}, S_{BA}$ exist only between $A$ and $B$ and do not affect $C$, the formula simplifies to the total union minus all interface faces:
\begin{equation}
    \label{eq:assoc_left}
    = (R_A \cup R_B \cup R_C) \setminus \{S_{AB}, S_{BA}, S_{BC}, S_{CB}\}
\end{equation}

Right Hand Side:
Let $R_{BC} = R_B \oplusThreeD R_C$.
\begin{equation}
    R_{BC} = (R_B \cup R_C) \setminus \{S_{BC}, S_{CB}\}
\end{equation}
Next, calculate $R_A \oplusThreeD R_{BC}$. The contact interface involves $A$ and $B$:
\begin{align}
    &R_A \oplusThreeD (R_B \oplusThreeD R_C) 
    = (R_A \cup R_{BC}) \setminus \{S_{AB}, S_{BA}\} \label{eq:assoc_right_step1} \\
    &\quad= \left( R_A \cup \left( (R_B \cup R_C) \setminus \{S_{BC}, S_{CB}\} \right) \right) \setminus \{S_{AB}, S_{BA}\} \label{eq:assoc_right_step2}
\end{align}
Similarly, this simplifies to the total union minus all interface faces:
\begin{equation}
    \label{eq:assoc_right}
    = (R_A \cup R_B \cup R_C) \setminus \{S_{AB}, S_{BA}, S_{BC}, S_{CB}\}
\end{equation}
Comparing Eq. \ref{eq:assoc_left} and Eq. \ref{eq:assoc_right}, the results are identical:
\begin{equation}
    (R_A \oplusThreeD R_B) \oplusThreeD R_C = R_A \oplusThreeD (R_B \oplusThreeD R_C)
\end{equation}
\end{proof}

\subsection{Predicate Library and Theorem Bank}
\label{app:a2}

% 二级附录：用 subsection
\subsubsection{Predicate Library}
\label{subsec:solid_builtin_predicates}
The Solid Predicate Library encompasses 120 predicates, comprising 35 fundamental predicates (Table~\ref{tab:solid_builtin_predicates}) natively integrated into the solver, along with 20 entity types (Table~\ref{tab:solid_entities}), 35 entity relationships (Table~\ref{tab:solid_relations}), and 30 attribute descriptors (Table~\ref{tab:solid_attributions}) defined via a dedicated predicate.

\begin{table}[ht]
\centering
\caption{fundamental predicates}
\label{tab:solid_builtin_predicates}
\small
\begin{tabular}{ccll}
\toprule
\textbf{id} & \textbf{type} & \textbf{name} & \textbf{examples} \\
\midrule
1 & Construction & Coplanar & Coplanar(ABCD) \\
2 & Construction & Cospherical & Cospherical(O,ABC) \\
3 & Construction & Collinear & Collinear(ABC) \\
4 & Construction & Cocircular & Cocircular(O,ABC) \\
5 & BasicEntity & Plane & Plane(U) \\
6 & BasicEntity & Sphere & Sphere(O) \\
 & ......&......&......\\
\bottomrule
\end{tabular}
\end{table}

\begin{table}[ht]
\centering
\caption{Entities defined for solid geometry using the predicate}
\label{tab:solid_entities}
\small
\begin{tabular}{lcc}
\toprule
\textbf{id} & \textbf{type} & \textbf{examples}\\
\midrule
1 & Entity & Cube(ABCDEFGH) \\
2 & Entity & Cuboid(ABCDEFGH) \\
3 & Entity & Cone(O,P) \\
4 & Entity & Cylinder(P,Q)\\
5 & Entity & Prism(P,Q)\\
6 & Entity & TriangularPrism(ABC,DEF)\\
 &......&......\\
\bottomrule
\end{tabular}
\end{table}

\begin{table}[ht]
\centering
\caption{Relations defined for solid geometry using the predicate}
\label{tab:solid_relations}
\small
\begin{tabular}{lcc}
\toprule
\textbf{id} & \textbf{type} & \textbf{examples} \\
\midrule
1 & Relation & ParallelBetweenPlane(U,V) \\
2 & Relation & PerpendicularBetweenPlane(U,V) \\
3 & Relation & PerpendicularBetweenLineAndPlane(AB,U) \\
4 & Relation & IsDiameterOfSphere(AB,O) \\
5 & Relation & IsTangentOfSphere(PA,O) \\
6 & Relation & IsCentreOfSphere(P,O) \\
 & ...... & ...... \\
\bottomrule
\end{tabular}
\end{table}

\begin{table}[ht]
\centering
\caption{Attributions defined for solid geometry using the predicate}
\label{tab:solid_attributions}
\small
\begin{tabular}{lcc}
\toprule
\textbf{id} & \textbf{type} & \textbf{examples}\\
\midrule
1 & Attribution & HeightOfCone(O,P)  \\
2 & Attribution & HeightOfCylinder(P,Q)  \\
3 & Attribution & HeightOfPrism(P,Q)  \\
4 & Attribution & HeightOfTriangularPrism(ABC,DEF)  \\
5 & Attribution & RadiusOfSphere(O)  \\
6 & Attribution & DiameterOfSphere(O)  \\
 &......&......\\
\bottomrule
\end{tabular}
\end{table}

The detailed statements for defining a predicate is as shown in the Tab.~\ref{tab:midpoint_check}, including the predicate name and point variable declaration, validity check declaration, multiple representations, and automatic expansion.
 
\begin{table*}[ht]
\centering
\caption{example for defining a predicate}
\begin{tabular}{lcl}
\hline
name & item & content \\
\hline
\multirow{5}{*}{IsMidpointOfLine(M,AB)} &  & Point(M) \\
 & check & Line(AB) \\
 &  & Collinear(AMB) \\
 & multi & M,BA \\
 & extend & Equal(LengthOfLine(AM),LengthOfLine(MB)) \\
\hline
\end{tabular}
\label{tab:midpoint_check}
\end{table*}

\subsubsection{Theorem Bank}
As shown in Table \ref{tab:theorem_examples}, this section presents some representative theorems from the theorem library, demonstrating the formal representation of geometric knowledge used in the automated reasoning system.

These theorems focus on line-plane relationships and solid geometry: plane-to-plane relationships (transitivity), line-to-plane relationships (perpendicular and parallel judgments), sphere formulas (area and volume), and cylinder properties (volume and height judgments). The theorem library contains over 200 such theorems covering comprehensive geometric knowledge from basic properties to advanced relationships.

\begin{table*}[ht]
\centering
\caption{Examples from the theorem library.}
\label{tab:theorem_examples}
\scriptsize
\setlength{\tabcolsep}{4pt}
\begin{tabularx}{\textwidth}{>{\raggedright\arraybackslash}X>{\raggedright\arraybackslash}X>{\raggedright\arraybackslash}X}
\toprule
\textbf{theorem} & \textbf{premise} & \textbf{conclusion} \\
\midrule
transitivity\_between\_plane\_and\_plane & ParallelBetweenPlane(U,V) \& \allowbreak ParallelBetweenPlane(V,W) & ParallelBetweenPlane(U,W) \\
perpendicular\_judgment\_between\_line\_and\_plane & PerpendicularBetweenLine(AD,BD) \& \allowbreak PerpendicularBetweenLine(AD,CD) \& \allowbreak Coplanar(U,BCD) \& \allowbreak $\sim$Coplanar(U,AD) & PerpendicularBetweenLineAndPlane(AD,U) \\
perpendicular\_judgment\_between\_plane\_and\_plane & PerpendicularBetweenLineAndPlane(AB,U) \& \allowbreak Coplanar(V,AB) & PerpendicularBetweenPlane(U,V) \\
parallel\_judgment\_between\_line\_and\_plane & ParallelBetweenLine(AB,CD) \& \allowbreak Coplanar(U,CD) \& \allowbreak $\sim$Coplanar(U,AB) & ParallelBetweenLineAndPlane(AB,U) \\
sphere\_area\_formula & Sphere(O) & Equal(AreaOfSphere(O),\allowbreak Mul(4,\allowbreak pi,\allowbreak RadiusOfSphere(O),\allowbreak RadiusOfSphere(O))) \\
sphere\_volume\_formula & Sphere(O) & Equal(VolumeOfSphere(O),\allowbreak Mul(4/3,\allowbreak pi,\allowbreak RadiusOfSphere(O),\allowbreak RadiusOfSphere(O),\allowbreak RadiusOfSphere(O))) \\
cylinder\_volume\_formula\_common & Cylinder(P,Q) & Equal(VolumeOfCylinder(P,Q),\allowbreak Mul(AreaOfCircle(P),\allowbreak HeightOfCylinder(P,Q))) \\
height\_of\_cylinder\_judgment & Cylinder(P,Q) \& \allowbreak Coplanar(P,A) \& \allowbreak Coplanar(Q,B) \& \allowbreak PerpendicularBetweenLineAndPlane(AB,P) & Equal(LengthOfLine(AB),\allowbreak HeightOfCylinder(P,Q)) \\
......&......&......\\
\bottomrule
\end{tabularx}
\end{table*}
\clearpage

%% file: sec/aB.tex
\clearpage
\section{Dataset}
\label{app:b}
\subsection{Data and Annotation}
SolidFGeo2k (Fig.~\ref{fig:placeholder11}) presents examples of geometric problems under this benchmark, covering various scenarios and tasks related to solid geometry; PlaneFGeo3k (Fig.~\ref{fig:plane3k}) focuses on the field of plane geometry; MathVerse-solid (Fig.~\ref{fig:mathverse}) is a representative resource among existing geometric benchmarks, providing abundant supplementary data support for solid geometry tasks. Together, the three constitute a multi-scenario and multi-level dataset system for geometric tasks.

\begin{figure}[ht]
    \centering
    \includegraphics[width=1\linewidth]{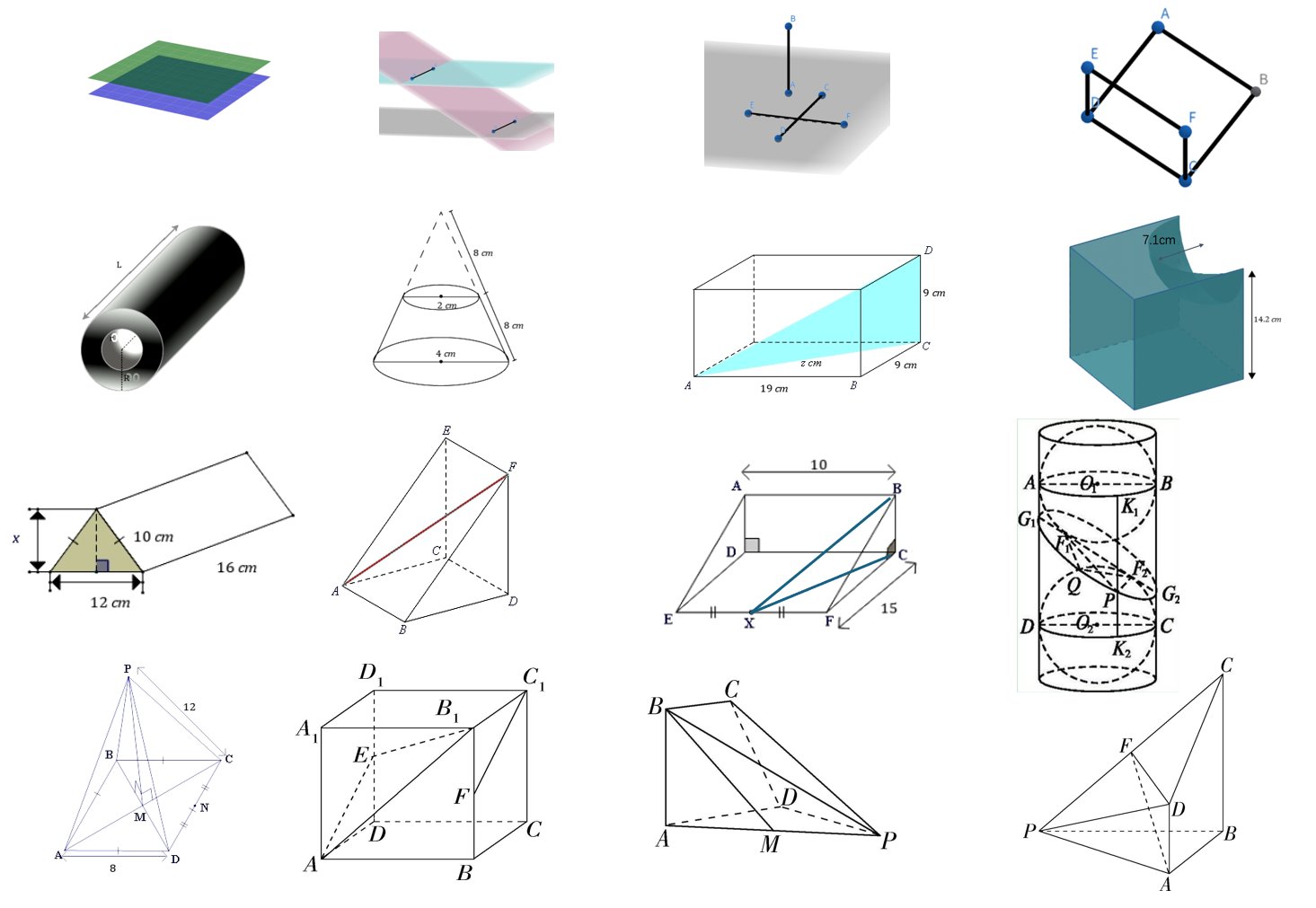}
    \caption{Examples from SolidFGeo2K}
    \label{fig:placeholder11}
\end{figure}
\begin{figure}[ht]
    \centering
    \includegraphics[width=1\linewidth]{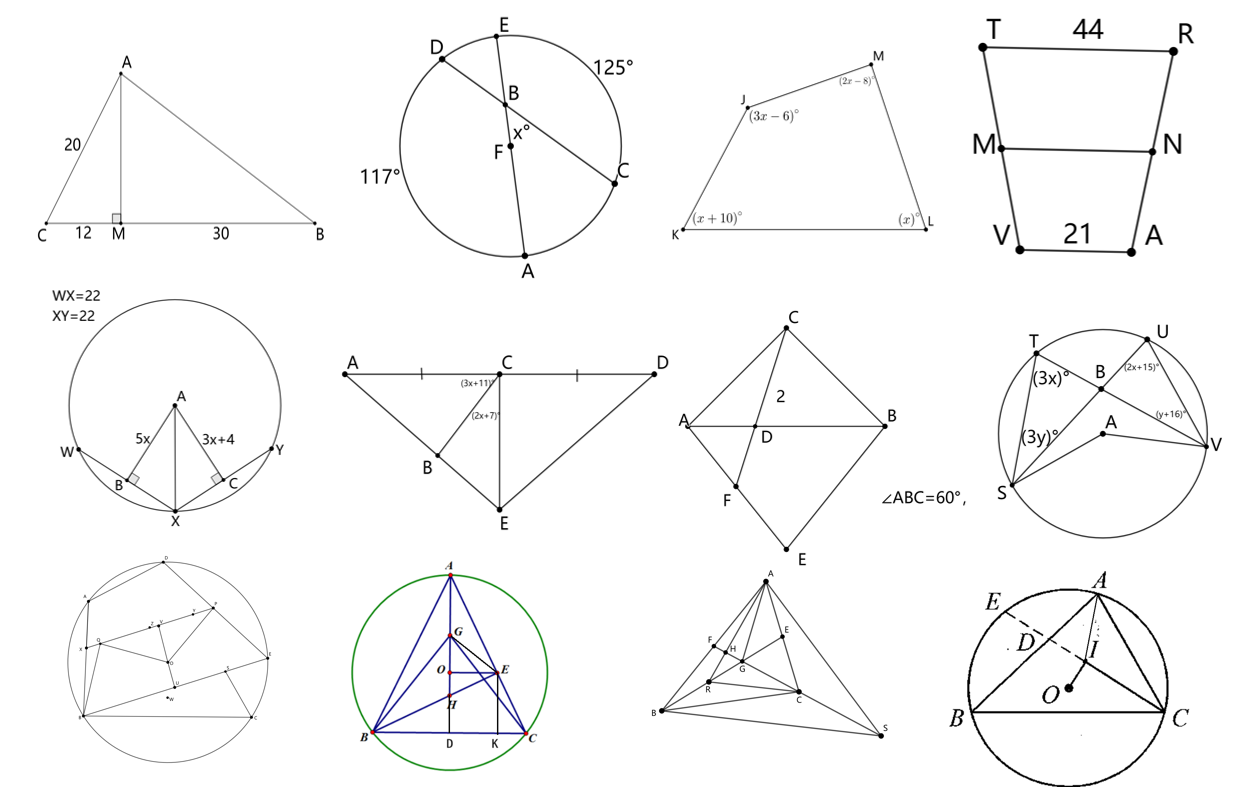}
    \caption{Examples from PlaneFGeo3k}
    \label{fig:plane3k}
\end{figure}
\begin{figure}[ht]
    \centering
    \includegraphics[width=1\linewidth]{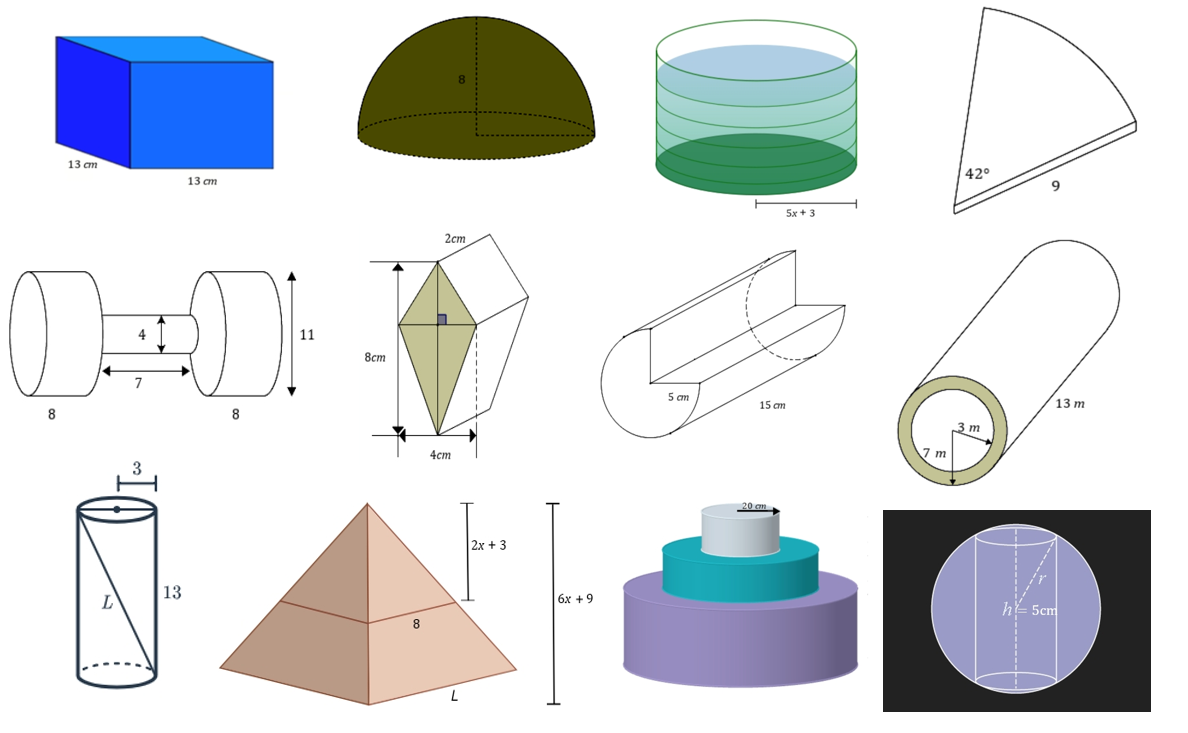}
    \caption{Examples from Mathverse-solid}
    \label{fig:mathverse}
\end{figure}
%专家标注3个数据集，如左边；cdl如图，还有4个细粒度
\begin{figure}
    \centering
    \includegraphics[width=1\linewidth]{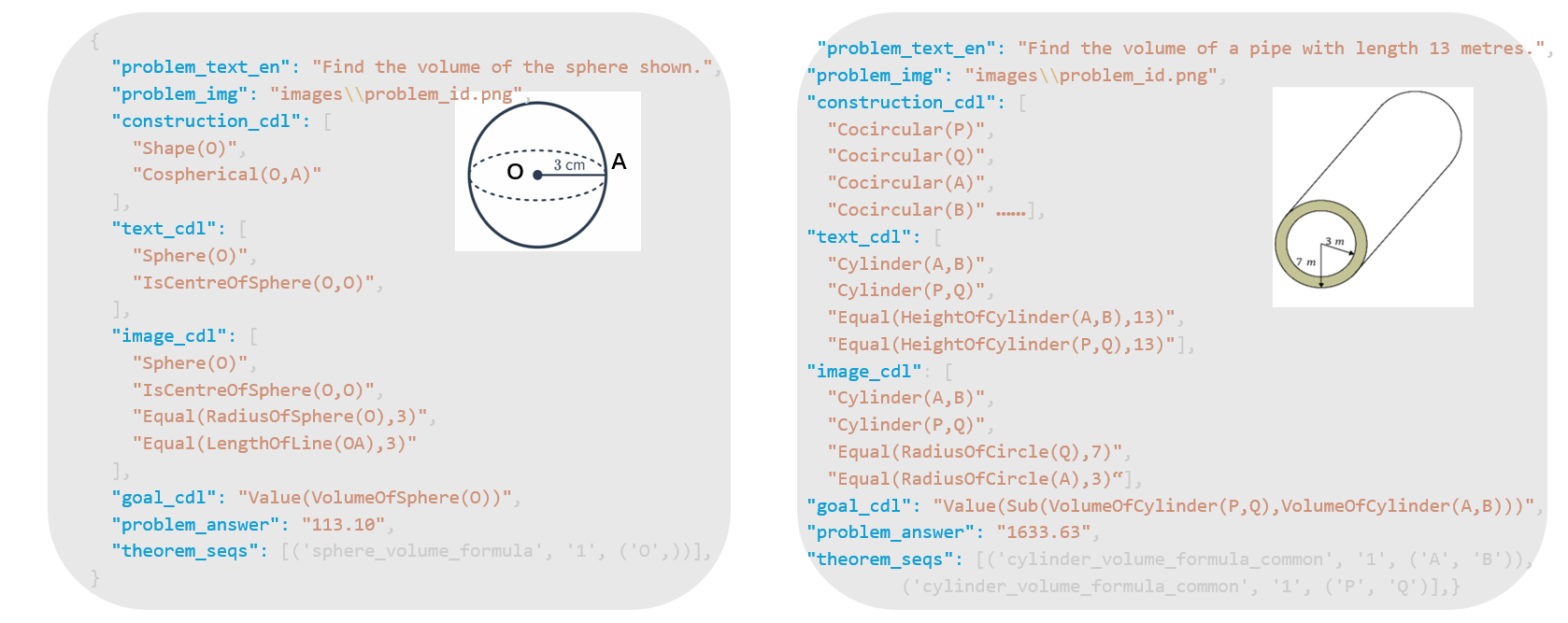}
    \caption{Data Samples from Mathverse-solid and SolidFGeo2k}
    \label{fig:sphere}
\end{figure}

To convert raw problem content into actionable resources for geometric formalization and model training, datasets are systematically annotated manually by experts. This annotation work adheres to standardized protocols and uses professional tools. Meanwhile, formal language encoding is adopted to convert the annotated information into Geometric Description Language. As shown in Figure \ref{fig:sphere}, the geometric description language and the standard answers are annotated based on the original text and images, which serve as the ground truth.

\subsection{Data Construction}
In our dataset SolidFGeo2k, we collect solid geometric problems in existing datasets except SolidGeo (containing other datasets) as shown in Fig~\ref{fig:dataset_comparison}. In addition, we add new samples (31.1\%) for the comprehensive evaluation. For each problem, we ask \textbf{two undergraduat students} to give detailed annotations including parsing results, reasoning process, and correct answer. For your reference, we show differenes to existing datasets in Fig~\ref{fig:dataset_comparison}.   

\begin{figure}[ht]
    \centering
    \includegraphics[width=0.90\linewidth]{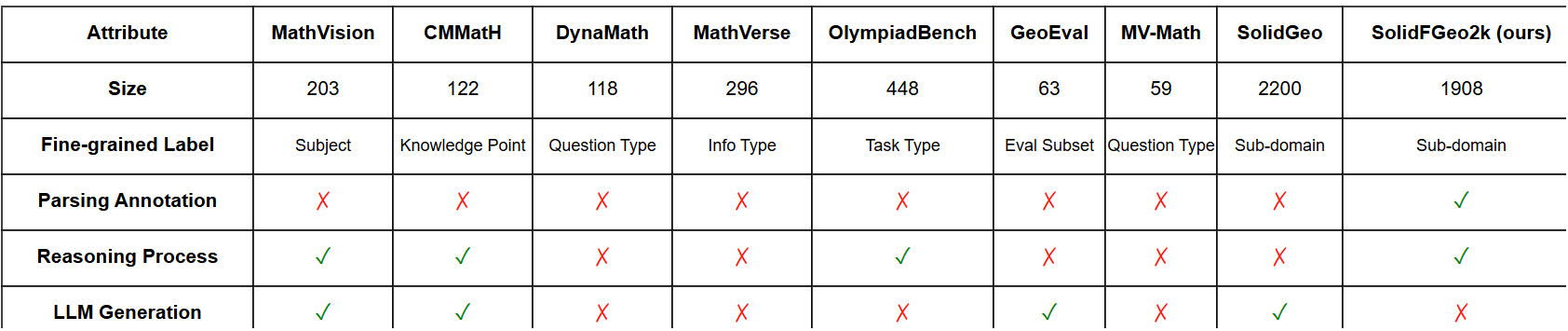}
 
    \caption{Comparison of Datasets for Solid Geometry Problems}
   
    \label{fig:dataset_comparison}
\end{figure}

\subsection{Fine-grained Subject categorization}
\begin{figure}[ht]
    \centering
    \includegraphics[width=1\linewidth]{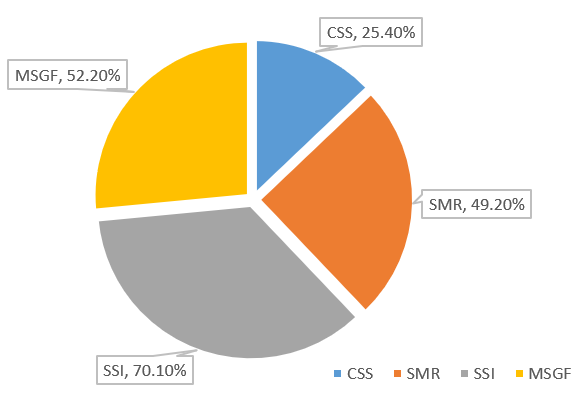}
    \caption{Proportion of different subjects in SolidFGeo2k, note that a single question may match multiple different subjects (Fig. \ref{fig:subject})}
    \label{fig:percent}
\end{figure}
The SolidFGeo2k and Mathverse-solid dataset incorporates fine-grained subject categorization, covering the full spectrum of spatial reasoning skills required. Figure \ref{fig:percent} categorizes SolidFGeo2k’s content into themes: Composite Solid Structure, Spatial Metric Relations, Solid Shape Identification, and Measurement of Solid Geometric Forms, demonstrating the dataset’s coverage of core solid geometry topics. Figure 18’s pie chart quantifies the proportion of different subjects in SolidFGeo2k: SSI (Solid Shape Identification) accounts for 70.10\%, SMR (Spatial Metric Relations) for 49.20\%, CSS (Composite Solid Structure) for 25.40\%, and MSGF (Measurement of Solid Geometric Forms) for 52.20\%. Note that a single problem may belong to multiple subjects, reflecting the interdisciplinary nature of geometric reasoning tasks.
\begin{figure}[ht]
    \centering
    \includegraphics[width=1\linewidth]{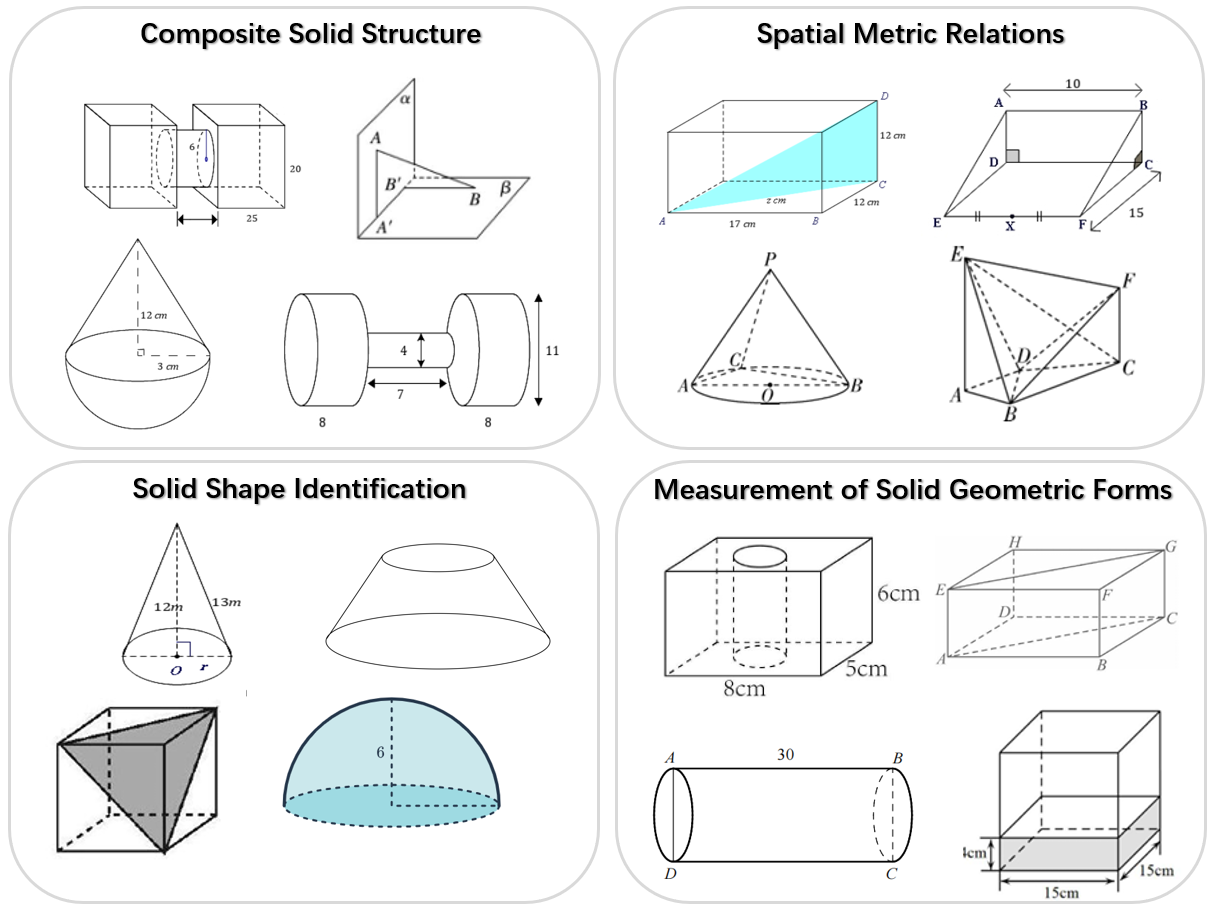}
    \caption{Examples from different subjects}
    \label{fig:subject}
\end{figure}

\clearpage

%% file: sec/aC.tex
\clearpage
\section{Multimodal Formalization Parser (M2FP) Supplementary Information}
\label{app:c}

\subsection{M2FP Pipeline Architecture}
\label{app:c1}

\subsubsection{Overview and Objectives}

The Multimodal Formalization Parser (M2FP) translates natural language geometric problem descriptions and visual diagrams into Conditional Description Language (CDL), enabling precise computational reasoning. It integrates information from text and images, converts descriptions into predicate-based CDL format, and ensures completeness and consistency across modalities.

\subsubsection{Output Structure}
\label{subsubsec:output_structure}

The parser produces a structured CDL representation with four components:

\begin{enumerate}
    \item \textbf{\texttt{construction\_cdl}}: Geometric construction predicates defining fundamental structure (e.g., \texttt{Shape(AB,BC,CD,DA)}, \texttt{Collinear(PABQ)}, \texttt{Cocircular(O)})
    
    \item \textbf{\texttt{text\_cdl}}: Predicates extracted from natural language (e.g., \texttt{Equal(LengthOfLine(A,B),5)}, \texttt{ParallelBetweenLine(A,B,C,D)})
    
    \item \textbf{\texttt{image\_cdl}}: Predicates derived from visual analysis (e.g., \texttt{PerpendicularBetweenLine(AB,BC)}, \texttt{Equal(RadiusOfCircle(O),3)})
    
    \item \textbf{\texttt{goal\_cdl}}: A single canonicalized predicate representing the problem's objective (e.g., \texttt{Value(VolumeOfCone(O,P))})
\end{enumerate}

The system ensures entities in \texttt{text\_cdl} and \texttt{image\_cdl} are declared in \texttt{construction\_cdl}, maintaining cross-modal consistency.

\subsubsection{Pipeline Components}
\label{subsubsec:pipeline_components}

\paragraph{Stage 1: Text Parsing}

The text parsing module extracts entities (points, lines, shapes), maps relations (e.g., ``parallel'' $\rightarrow$ \texttt{ParallelBetweenLine(A,B,C,D)}), parameterizes attributes (e.g., ``length of AB is 5'' $\rightarrow$ \texttt{Equal(LengthOfLine(A,B),5)}), identifies goals (e.g., ``Find the volume'' $\rightarrow$ \texttt{Value(VolumeOfCone(O,P))}), and normalizes the output format.

\paragraph{Stage 2: Image Parsing}

The image parsing module detects primitives (points, lines, shapes), recognizes visual relations (right-angle marks, parallel indicators, intersections), parses numeric annotations from diagram labels, aligns entities with text mentions, resolves conflicts, and encodes visual information into CDL format consistent with text-derived predicates.

\paragraph{Stage 3: Output Packing}

The output packing stage consolidates all information into a JSON structure with four CDL components. Numeric values are formatted without units (supporting expressions like \texttt{36*pi}), entities use uppercase letters, and formatting follows strict rules (no extra spaces). The system validates that all entities are declared in \texttt{construction\_cdl} and all predicates conform to the official vocabulary.

\subsection{Fuzzy Matching Evaluation Metrics}
\label{app:c2}

We employ fuzzy set-based metrics to evaluate CDL parsing outputs, allowing partial credit for semantically similar predicates that may differ in variable naming or formatting. This section provides complete definitions enabling full reproducibility of the evaluation metrics.

\subsubsection{Element Normalization Process}
\label{subsubsec:element_normalization}

Before computing similarity scores, all CDL elements must be normalized to eliminate differences caused by variable naming conventions. Given a raw CDL set $P = \{p_1, p_2, \ldots, p_n\}$ (predicted) or $G = \{g_1, g_2, \ldots, g_m\}$ (ground truth), we apply a normalization function $\text{Normalize}(\cdot)$ to obtain normalized sets $P' = \{\text{Normalize}(p) \mid p \in P\}$ and $G' = \{\text{Normalize}(g) \mid g \in G\}$.

The normalization process follows these steps:
\begin{enumerate}
    \item \textbf{Remove whitespace}: Strip all spaces from the element string
    \item \textbf{Replace variables}: Replace all variable names (single uppercase letters like A, B, C, O, P, etc.) with the placeholder \texttt{\_V\_}
    \item \textbf{Preserve numeric values}: Keep all numeric values unchanged
    \item \textbf{Process nested structures}: Recursively apply normalization to nested predicate structures
    \item \textbf{Normalize variable-only parameters}: If all parameters are variables, replace with \texttt{(\_V\_)}
\end{enumerate}

\textbf{Examples:}
\begin{itemize}
    \item \texttt{Equal(LengthOfLine(A,B),5)} $\rightarrow$ \texttt{Equal(LengthOfLine(\_V\_),5)}
    \item \texttt{Equal(HeightOfCone(O,P),12)} $\rightarrow$ \texttt{Equal(HeightOfCone(\_V\_),12)}
    \item \texttt{Shape(AB,BC,CD,DA)} $\rightarrow$ \texttt{Shape(\_V\_,\_V\_,\_V\_,\_V\_)}
    \item \texttt{Collinear(PABQ)} $\rightarrow$ \texttt{Collinear(\_V\_)}
\end{itemize}

The complete normalization algorithm is provided in Algorithm~\ref{alg:normalize}.

\subsubsection{Fuzzy Matching Score Function}
\label{subsubsec:score_function}

The core fuzzy matching score function $S(p', g') \in [0, 1]$ quantifies the similarity between a normalized predicted element $p' \in P'$ and a normalized ground truth element $g' \in G'$. This function is formally defined as:

\begin{equation}
S(p', g') = \begin{cases}
1.0 & \text{if } p' = g' \text{ (exact match)} \\[0.3em]
0.8 & \text{if } \text{predicate\_name}(p') = \text{predicate\_name}(g') \\
    & \quad \wedge \text{numbers}(p') \cap \text{numbers}(g') \neq \emptyset \\[0.3em]
0.5 & \text{if } \text{predicate\_name}(p') = \text{predicate\_name}(g') \\
    & \quad \text{only} \\[0.3em]
0.3 & \text{if } \text{numbers}(p') \cap \text{numbers}(g') \neq \emptyset \\
    & \quad \text{only} \\[0.3em]
0.0 & \text{otherwise}
\end{cases}
\label{eq:matching_score}
\end{equation}

where:
\begin{itemize}
    \item $\text{predicate\_name}(x)$ extracts the predicate name (the substring before the first parenthesis) from normalized element $x$
    \item $\text{numbers}(x)$ extracts all numeric values from normalized element $x$ using regular expressions
    \item The intersection $\text{numbers}(p') \cap \text{numbers}(g') \neq \emptyset$ means at least one numeric value appears in both elements
\end{itemize}

\textbf{Examples:}
\begin{itemize}
    \item $S(\texttt{Equal(\_V\_,5)}, \texttt{Equal(\_V\_,5)}) = 1.0$ (exact match)
    \item $S(\texttt{Equal(\_V\_,5)}, \texttt{Equal(\_V\_,5.0)}) = 0.8$ (predicate match + number intersection)
    \item $S(\texttt{Equal(\_V\_,5)}, \texttt{Equal(\_V\_,3)}) = 0.5$ (predicate match only)
    \item $S(\texttt{Equal(\_V\_,5)}, \texttt{LengthOf(\_V\_,5)}) = 0.3$ (number intersection only)
    \item $S(\texttt{Equal(\_V\_,5)}, \texttt{LengthOf(\_V\_,3)}) = 0.0$ (no match)
\end{itemize}

\subsubsection{Fuzzy Jaccard Similarity}
\label{subsubsec:fuzzy_jaccard}

Given normalized sets $P'$ and $G'$ (obtained from raw sets $P$ and $G$ via the normalization process in Section~\ref{subsubsec:element_normalization}), and using the scoring function $S(p', g')$ defined in Section~\ref{subsubsec:score_function}, the Fuzzy Jaccard Similarity extends the standard Jaccard index by replacing exact set intersection with a fuzzy intersection. The fuzzy Jaccard score is computed as:

\begin{equation}
\text{Score}(P, G) = \frac{\sum_{p' \in P'} \max_{g' \in G'} S(p', g')}{|P'| + |G'| - \sum_{p' \in P'} \max_{g' \in G'} S(p', g')}
\label{eq:fuzzy_jaccard}
\end{equation}

where:
\begin{itemize}
    \item The numerator $\sum_{p' \in P'} \max_{g' \in G'} S(p', g')$ represents the \textbf{fuzzy intersection}: the sum of best-match scores for each predicted element
    \item The denominator $|P'| + |G'| - \sum_{p' \in P'} \max_{g' \in G'} S(p', g')$ represents the \textbf{fuzzy union}: the sum of set sizes minus the intersection
    \item $P'$ and $G'$ are obtained by applying normalization to raw sets $P$ and $G$ as described in Section~\ref{subsubsec:element_normalization}
\end{itemize}

The complete computation algorithm is provided in Algorithm~\ref{alg:fuzzy_metrics}.

\subsubsection{Boundary Case Handling}
\label{subsubsec:boundary_cases}

Boundary cases are handled as follows: if both sets are empty, Jaccard = 1.0; if one set is empty, Jaccard = 0.0; division by zero is prevented with explicit checks.

\subsubsection{Evaluation Algorithms}

The evaluation framework implements three core algorithms: normalization (Algorithm~\ref{alg:normalize}), Fuzzy Jaccard calculation (Algorithm~\ref{alg:fuzzy_metrics}), and score matrix construction (Algorithm~\ref{alg:score_matrix}).

\begin{algorithm}[t]
\caption{CDL Element Normalization}
\label{alg:normalize}
\begin{algorithmic}[1]
\Function{NormalizeCDLElement}{element}
    \State $t \gets$ strip spaces(element)
    \If{no parentheses in $t$}
        \If{is number($t$)}
            \State \Return $t$
        \Else
            \State \Return replace all variables with \texttt{\_V\_}
        \EndIf
    \Else
        \State Process innermost parentheses recursively
        \State $parts \gets$ split by commas within parentheses
        \For{each $part$ in $parts$}
            \If{is number($part$)}
                \State Keep numerical value
            \ElsIf{contains parentheses}
                \State $part \gets$ \Call{NormalizeCDLElement}{$part$}
            \Else
                \State $part \gets$ \texttt{\_V\_}
            \EndIf
        \EndFor
        \If{all parts are \texttt{\_V\_}}
            \State \Return \texttt{(\_V\_)}
        \Else
            \State \Return concatenate processed parts
        \EndIf
    \EndIf
\EndFunction
\end{algorithmic}
\end{algorithm}

\begin{algorithm}[t]
\caption{Fuzzy Jaccard Similarity Calculation}
\label{alg:fuzzy_metrics}
\begin{algorithmic}[1]
\Function{FuzzyJaccard}{$P$, $G$}
    \State $P' \gets \{\Call{NormalizeCDLElement}{p} \mid p \in P\}$
    \State $G' \gets \{\Call{NormalizeCDLElement}{g} \mid g \in G\}$
    \If{boundary case detected}
        \State \Return boundary Jaccard value (as described in Section~\ref{subsubsec:boundary_cases})
    \EndIf
    \State $S \gets$ \Call{BuildScoreMatrix}{$P'$, $G'$}
    \State $row\_max[p'] \gets \max_{g' \in G'} S[p', g']$ for each $p' \in P'$
    \State $intersection \gets \sum_{p' \in P'} row\_max[p']$
    \State $union \gets |P'| + |G'| - intersection$
    \State $jaccard \gets intersection / union$
    \State \Return $jaccard$
\EndFunction
\end{algorithmic}
\end{algorithm}

\begin{algorithm}[t]
\caption{Matching Score Matrix Construction}
\label{alg:score_matrix}
\begin{algorithmic}[1]
\Function{BuildScoreMatrix}{$P'$, $G'$}
    \State Initialize $S$ as $|P'| \times |G'|$ matrix
    \For{each $p' \in P'$ and each $g' \in G'$}
        \If{$p' = g'$}
            \State $S[p', g'] \gets 1.0$
        \Else
            \State $pred\_name \gets$ predicate name($p'$)
            \State $gt\_name \gets$ predicate name($g'$)
            \State $pred\_nums \gets$ extract numbers($p'$)
            \State $gt\_nums \gets$ extract numbers($g'$)
            \State $name\_equal \gets (pred\_name = gt\_name)$
            \State $num\_intersect \gets$ intersects($pred\_nums$, $gt\_nums$)
            \If{$name\_equal$ and $num\_intersect$}
                \State $S[p', g'] \gets 0.8$
            \ElsIf{$name\_equal$}
                \State $S[p', g'] \gets 0.5$
            \ElsIf{$num\_intersect$}
                \State $S[p', g'] \gets 0.3$
            \Else
                \State $S[p', g'] \gets 0.0$
            \EndIf
        \EndIf
    \EndFor
    \State \Return $S$
\EndFunction
\end{algorithmic}
\end{algorithm}

\subsubsection{Computational Efficiency}

The algorithm computes a $|P'| \times |G'|$ score matrix with $O(|P'| \cdot |G'|)$ time complexity, providing a balance between matching quality and efficiency.

\subsection{Parsing Performance Across Models and Sample Sizes}
\label{app:c3}

This section presents comprehensive experimental results evaluating the parsing performance of various Multimodal Large Language Models (MLLMs) under different few-shot sample configurations. The evaluation employs fuzzy matching metrics as described in Section~\ref{subsec:fuzzy_metrics} to assess the quality of CDL parsing across construction predicates, condition predicates (merged text and image), and goal identification.

\subsubsection{Parse Score: Comprehensive Performance Metric}

The \textbf{Parse Score} serves as a comprehensive performance metric that aggregates the parsing quality across three critical components of geometric problem formalization: \textbf{Construction}, \textbf{Condition}, and \textbf{Goal}. The Parse Score is computed as the expectation (average) of the Jaccard similarity scores across these three components:

\begin{equation}
\text{Parse Score} = \frac{1}{3}\sum_{c \in \mathcal{I}} \text{Jaccard}_c
\end{equation}

where $\text{Jaccard}_c$ denotes the Jaccard similarity score for component $c$, $\mathcal{I} = \{\text{Construction}, \text{Condition}, \text{Goal}\}$

\subsubsection{Component-wise Parsing Performance}

To provide detailed insights into the parsing performance, we examine each component individually. The following figures present the Jaccard similarity scores for Construction, Condition, and Goal parsing, respectively, allowing for granular analysis of model strengths and weaknesses across different aspects of geometric formalization.

\begin{figure}[ht]
\centering
\includegraphics[width=0.5\textwidth]{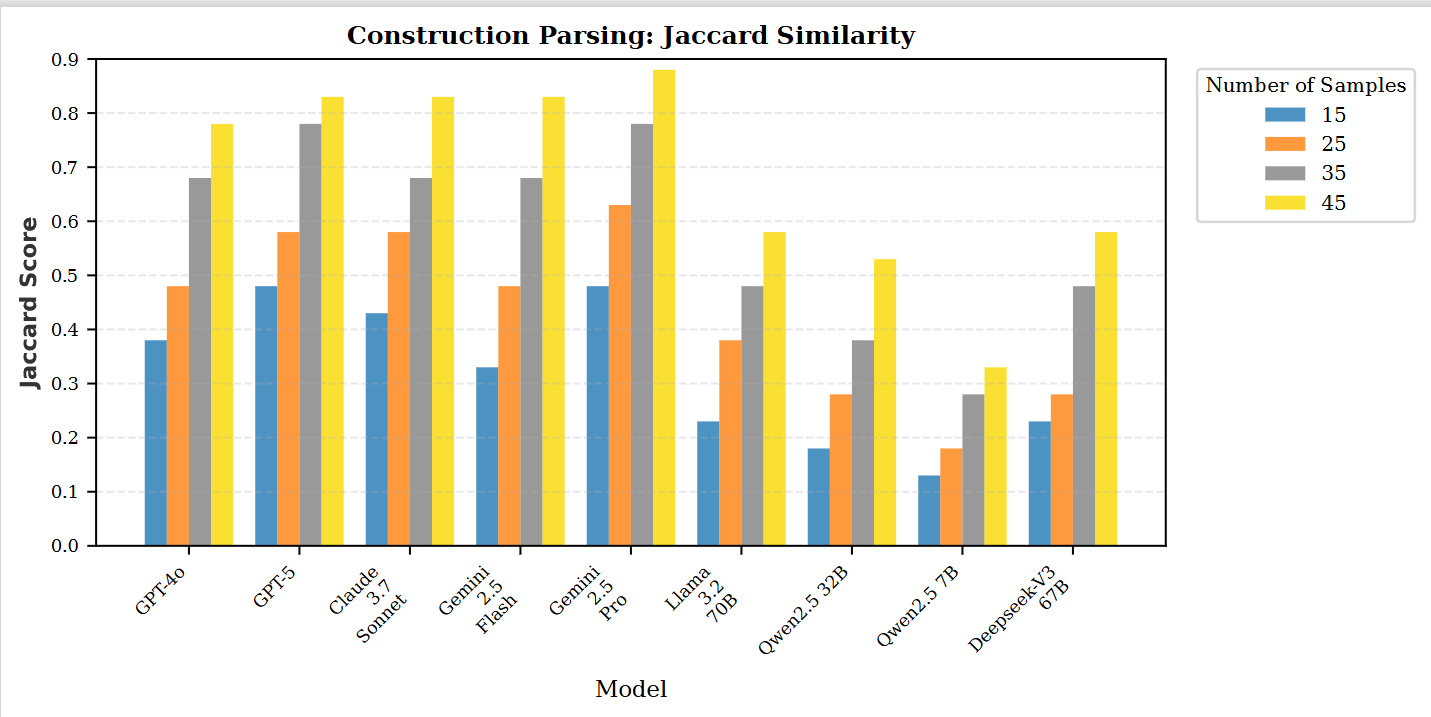}
\caption{Construction Parsing Performance: Jaccard similarity scores across different models and sample sizes. Construction parsing evaluates the accuracy of extracting geometric structure predicates (e.g., \texttt{Shape}, \texttt{Collinear}, \texttt{Cocircular}). All metrics use fuzzy matching as described in Section~\ref{subsec:fuzzy_metrics}.}
\label{fig:construction_jaccard}
\end{figure}

\begin{figure}[ht]
\centering
\includegraphics[width=0.47\textwidth]{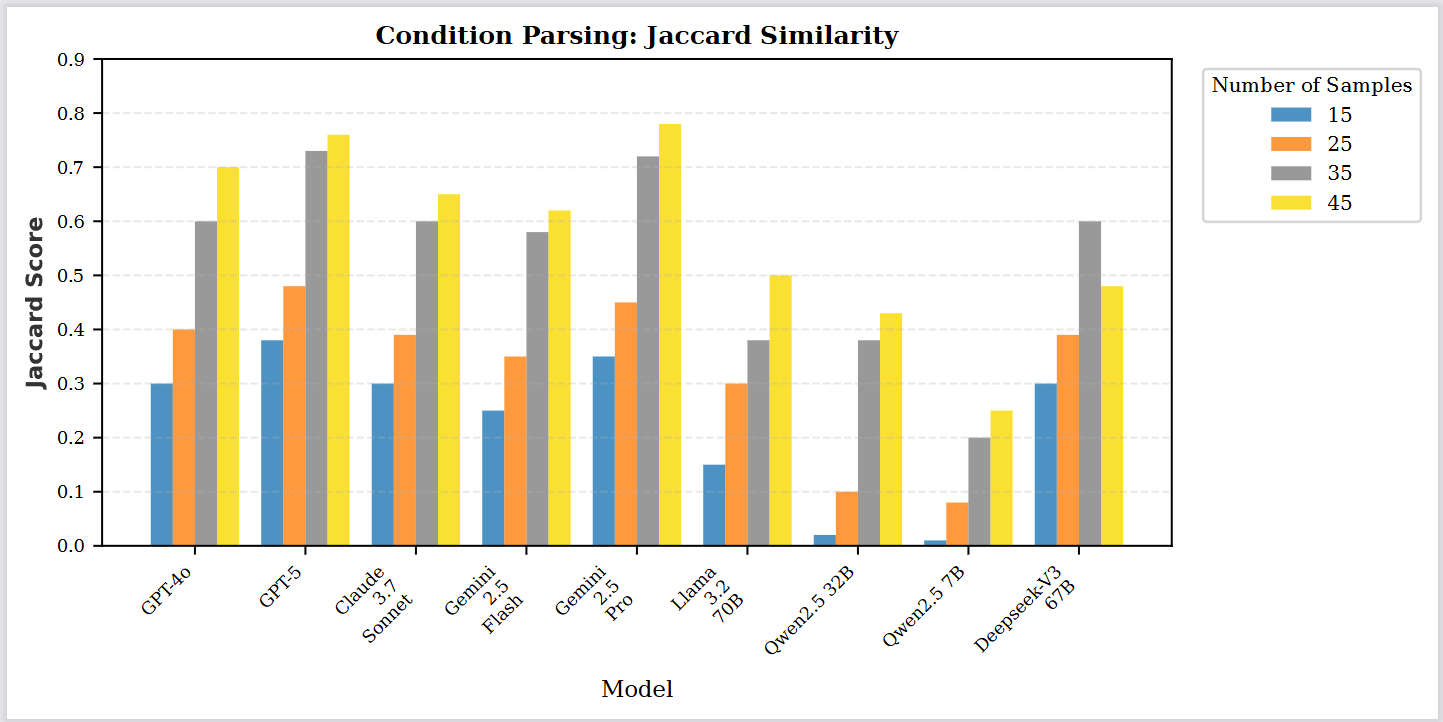}
\caption{Condition Parsing Performance: Jaccard similarity scores across different models and sample sizes. Condition parsing measures the quality of extracting geometric constraints from problem descriptions and visual diagrams (merged text and image). All metrics use fuzzy matching as described in Section~\ref{subsec:fuzzy_metrics}.}
\label{fig:condition_jaccard}
\end{figure}

\begin{figure}[ht]
\centering
\includegraphics[width=0.47\textwidth]{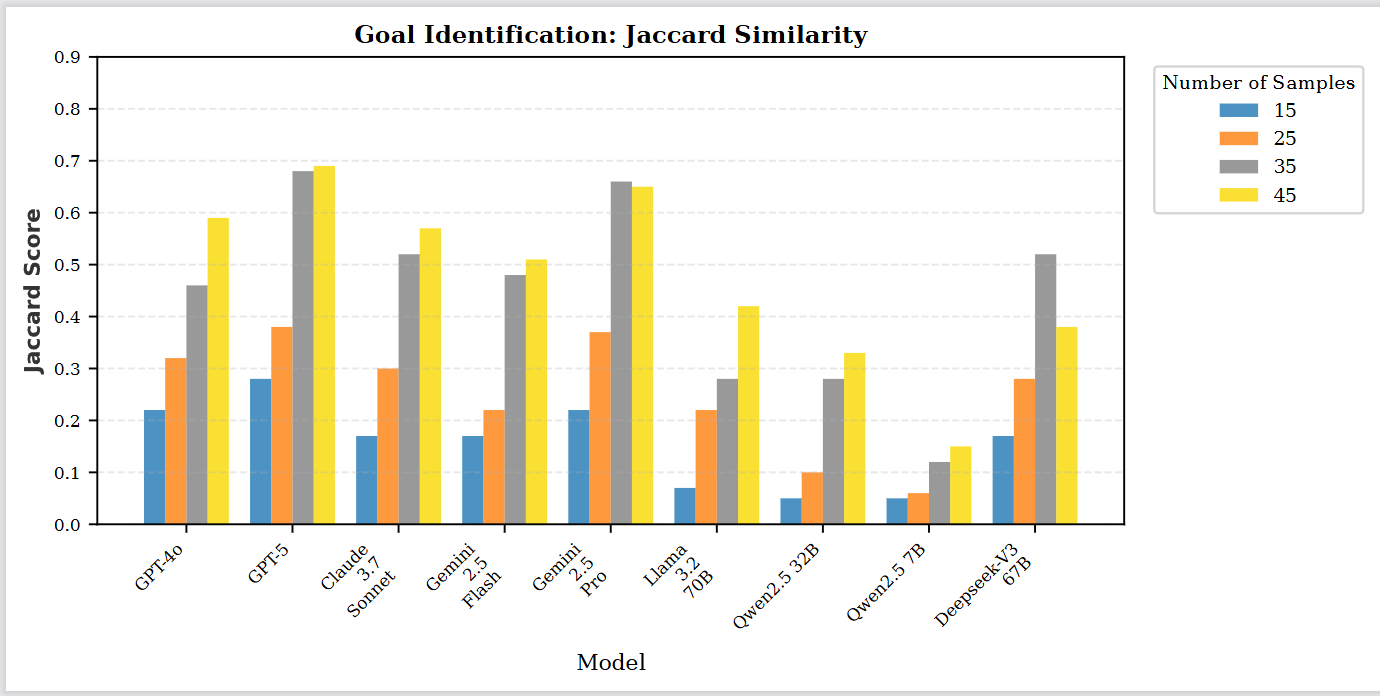}
\caption{Goal Parsing Performance: Jaccard similarity scores across different models and sample sizes. Goal parsing assesses the precision of identifying the problem's objective. All metrics use fuzzy matching as described in Section~\ref{subsec:fuzzy_metrics}.}
\label{fig:goal_jaccard}
\end{figure}

These three components collectively contribute to the Parse Score, which is computed as their average.

\subsubsection{Performance Analysis}

The experimental results (Figures~\ref{fig:construction_jaccard}, \ref{fig:condition_jaccard}, and \ref{fig:goal_jaccard}) show that Parse Score generally improves as sample size increases from 15 to 45 across all models. Construction parsing achieves the highest scores due to the structured nature of construction predicates, while goal parsing remains the most challenging task. Closed-source models consistently outperform open-source models, with larger models showing better performance within the same model family. The Parse Score effectively captures overall parsing quality by equally weighting the three components, ensuring comprehensive parsing capability is rewarded.

\subsection{Prompt Templates}
\label{app:c4}

This section introduces the specific prompt templates used in the M2FP system for CDL parsing and direct problem solving. The M2FP system uses prompt engineering to guide large language models: (1) task description with role definition and schema compliance, (2) strict predicate vocabulary constraints from GDL, (3) few-shot examples covering all predicate types, (4) rule-based guidance for source separation and formatting, and (5) error prevention warnings. This approach balances flexibility with strict constraints to ensure output quality.

\subsubsection{CDL Parsing Prompt Template}
\label{subsubsec:prompt_rules}

The following prompt template is used for \textbf{CDL parsing}, which guides large language models to convert natural language geometric problem descriptions and visual diagrams into formal CDL representations. Parsing step starts with a targeted prompt engineering approach: specialized prompts are constructed using expertly designed example sets that cover all basic predicates. These examples can express predicates and formal language without involving complex geometric relationships, and each includes geometric descriptions and their corresponding formalizations.

\begin{figure}[ht]
    \centering
    \includegraphics[width=1\linewidth]{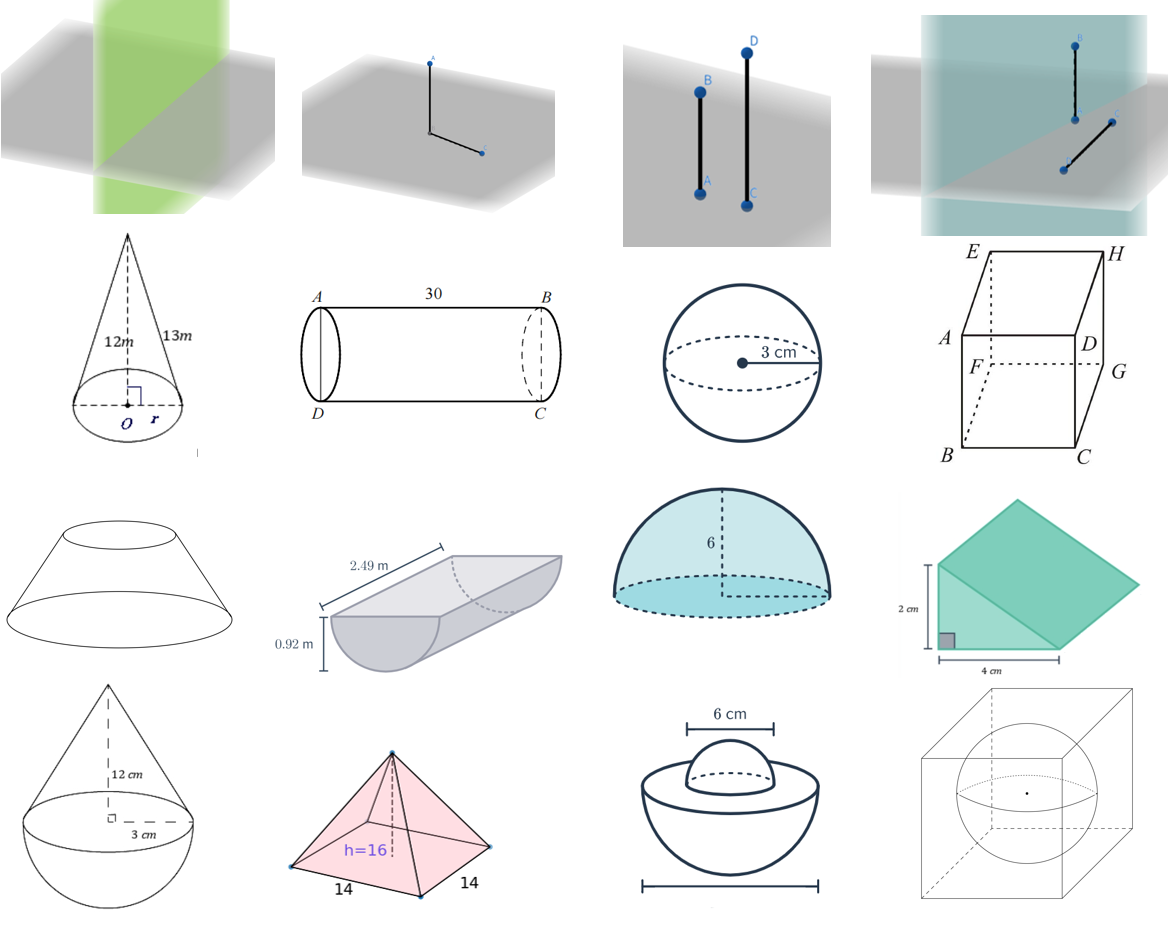}
    \caption{Examples from designed samples}
    \label{fig:45samples}
\end{figure}

The complete prompt template is shown below:

\begin{verbatim}
You are an expert in geometry, logic, and 
computer science. Your task is to precisely 
convert a geometry problem (with natural 
language and an image) into a JSON object 
following the provided JSON Schema. You 
must strictly follow the schema and output 
a complete JSON object.

Rule 0: Predicate Compliance 
(MOST IMPORTANT)
- All CDL predicates you generate 
  (e.g., Equal, Cone, LengthOfLine) MUST 
  be strictly chosen from the official 
  list below.
- Using any predicate that does not 
  appear in this list is strictly 
  forbidden.

--- Official Predicate List ---
[valid_predicates_str]
--- End of Official Predicate List ---

Core Rules and Constraints:

1) Information Source Separation:
   - text_cdl MUST include only facts 
     extracted from the natural language 
     description.
   - image_cdl MUST include only facts 
     directly observable from the image 
     (e.g., length labels, right-angle 
     marks, shape recognition).
   - If a fact appears in both text and 
     image, include it in both fields.

2) construction_cdl - Geometric 
   construction predicates (IMPORTANT):
   construction_cdl defines basic 
   construction for entities, and MUST 
   include the following types where 
   applicable:
   - Shape predicates: define 
     edges/segments of shapes
     * For segments/edges: 
       Shape(AB,BC,CD,DA) or Shape(OP,PO) 
       or Shape(PQ,QP)
     * For points (spheres etc.): 
       Shape(O) or Shape(P)
     * Example: rectangles require 
       Shape(AB,BC,CD,DA); cylinders 
       require Shape(PQ,QP)
   - Collinearity/Cocircular/Coplanar/
     Cospherical:
     * Collinear(PABQ) - P, A, B, Q are 
       collinear
     * Cocircular(O) - O is on a circle 
       (for cone/cylinder base center)
     * Coplanar(U,ABCD) - U coplanar with 
       ABCD
     * Cospherical(O) - O is on a sphere 
       (for spheres)
   Important:
   - Carefully analyze the image to 
     identify all necessary 
     edges/segments/relations
   - Most problems require at least one 
     Shape(...)
   - Cones/cylinders often need Shape(...) 
     and Cocircular(...)
   - Spheres often need Shape(O) and 
     Cospherical(O)

3) Answer formatting:
   - problem_answer MUST be a pure number 
     or expression (e.g., "10", "36*pi"), 
     and MUST NOT contain units or extra 
     text.

4) Core predicate logic:
   - Length/Height: 
     Equal(LengthOfLine(A,B),5), 
     Equal(HeightOfCone(O,P),12)
   - Relations: 
     PerpendicularBetweenLine(A,B,C,D), 
     ParallelBetweenLine(A,B,C,D)
   - Goal: the requested quantity MUST be 
     wrapped by Value(...).

5) Predicate and Operator Legality 
   (CRITICAL):
   - Only reuse names from the official 
     predicate list; DO NOT invent new 
     construction predicates.
   - Quantities allowed in CDL expressions 
     are LIMITED to standard forms: 
     VolumeOfCone, 
     SurfaceAreaOfCylinder, 
     AreaOfCircle, LengthOfLine, etc.
   - Only the following algebraic 
     operators are allowed: Value, Add, 
     Sub, Mul, Div.
   - Formatting: NO extra spaces inside 
     any predicate/operator.

6) Completeness Checks:
   - Ensure every entity used by 
     text_cdl/image_cdl exists in 
     construction_cdl
   - Ensure the target entity in goal_cdl 
     exists in the construction as well
   - Self-check after generation: verify 
     all predicates/operators are allowed, 
     no extra spaces, and no undeclared 
     entities are referenced.

Important: Output Requirements
1. You MUST output a complete JSON object 
   with all required fields
2. All CDL fields MUST be arrays of 
   strings
3. goal_cdl MUST be a string (e.g., 
   "Value(VolumeOfCone(O,P))")
\end{verbatim}

\subsubsection{Direct Problem Solving Prompt}
\label{subsubsec:direct_solving_prompt}

In addition to CDL generation, the system also supports \textbf{direct problem solving} using GPT models for \textbf{testing model accuracy}. This approach bypasses formalization and directly generates answers to geometry problems, providing a baseline for comparison with formalized CDL-based reasoning systems.

The direct solving prompt is simpler than CDL generation and focuses on step-by-step reasoning. The prompt template is:

\begin{verbatim}
You are an expert in geometry and 
mathematics. Please solve the following 
geometry problem step by step.

Important Instructions:
1. Carefully analyze the problem text and 
the accompanying image
2. Show your reasoning process step by step
3. At the end, provide your final answer in 
a clear format
4. **Your final answer should be ONLY a 
number or mathematical expression (like 
"10", "5.5", "12*pi", "36*pi"), without any 
units or text**
5. Put your final answer on a line starting 
with "FINAL ANSWER: "

Example format:
FINAL ANSWER: 10

or 

FINAL ANSWER: 36*pi

Now, please solve this problem:
\end{verbatim}

%% file: sec/aD.tex
\clearpage
\section{SGRE Supplementary Information}
\label{app:d}
\subsection{Theorem Search Tree and Search Process}
\label{app:d1}
\begin{figure}[ht]
    \centering
    \includegraphics[width=1\linewidth]{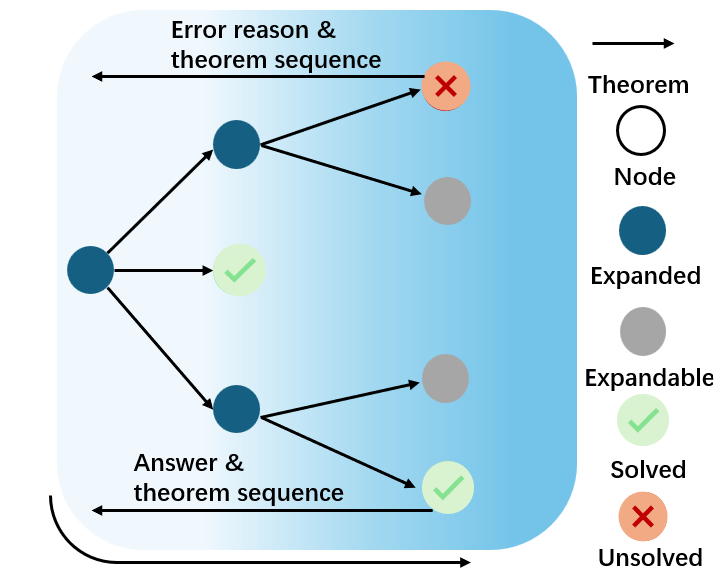}
    \caption{Theorem Search Tree and a inference demonstration is shown in fig.~\ref{fig:bigtree}}
    \label{fig:placeholder13}
\end{figure}

The search process involves constructing a search tree, where nodes represent a set of known conditions, this constitutes a state variable, while arrows denote the applied theorems that supplement the existing 0 condition. As shown in figure \ref{fig:placeholder13}, starting from the known conditions of the problem, theorems are continuously applied to derive new conditions until the goal is reached. The search tree is traversed using either breadth-first or depth-first search, returning nodes in the "expandable" state. The theorems associated with the current nodes are then repeatedly applied to verify if the problem has been solved. Guided by the known conditions of the current node, the "expand" operation checks the list of applicable theorems and generates new nodes. A simple example of "expand" is as follows: given a cube with a known side length, it is intuitive to expand to other side lengths and various angles. This process requires no additional theorem application and can also validate contradictory conditions resulting from upstream parsing.

\subsection{Traditional Search vs. SGRE}
\label{app:d2}
\begin{figure}[ht]
    \centering
    \includegraphics[width=1\linewidth]{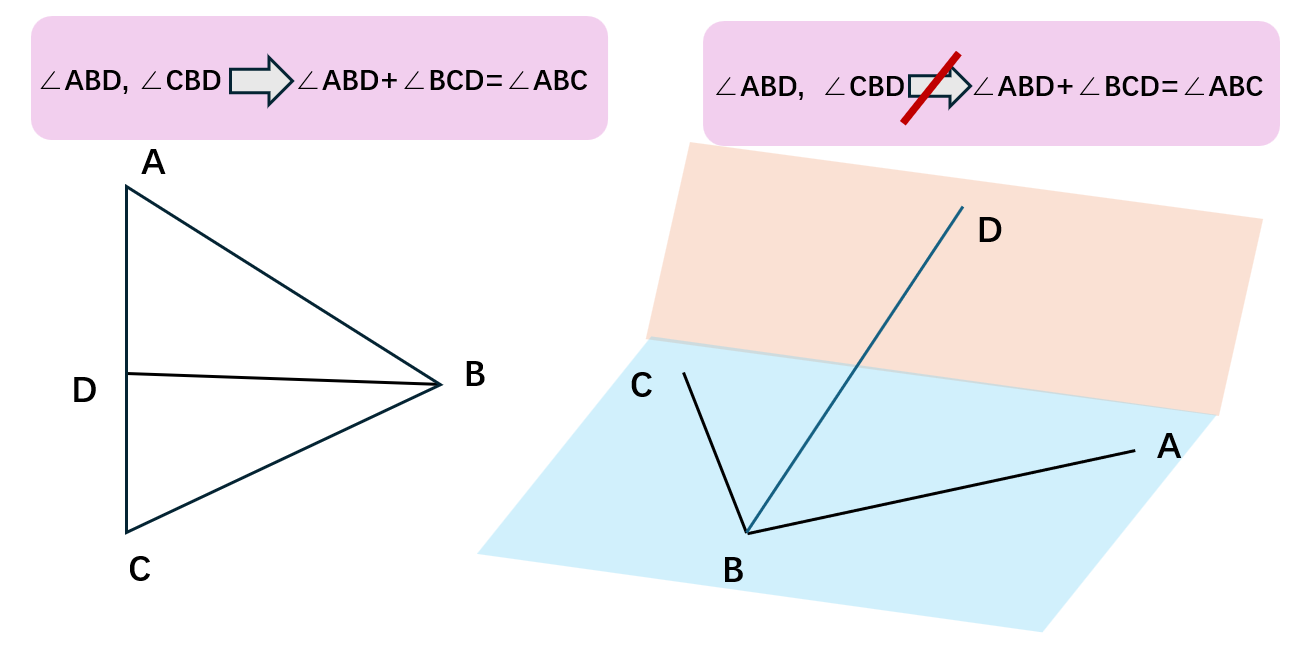}
    \caption{Plane and Solid differ greatly in all aspects, even if the CDL is consistent. SGRE needs to distinguish the Combination of multiple theories}
    \label{fig:placeholder15}
\end{figure}
\begin{figure}[ht]
    \centering
    \includegraphics[width=1\linewidth]{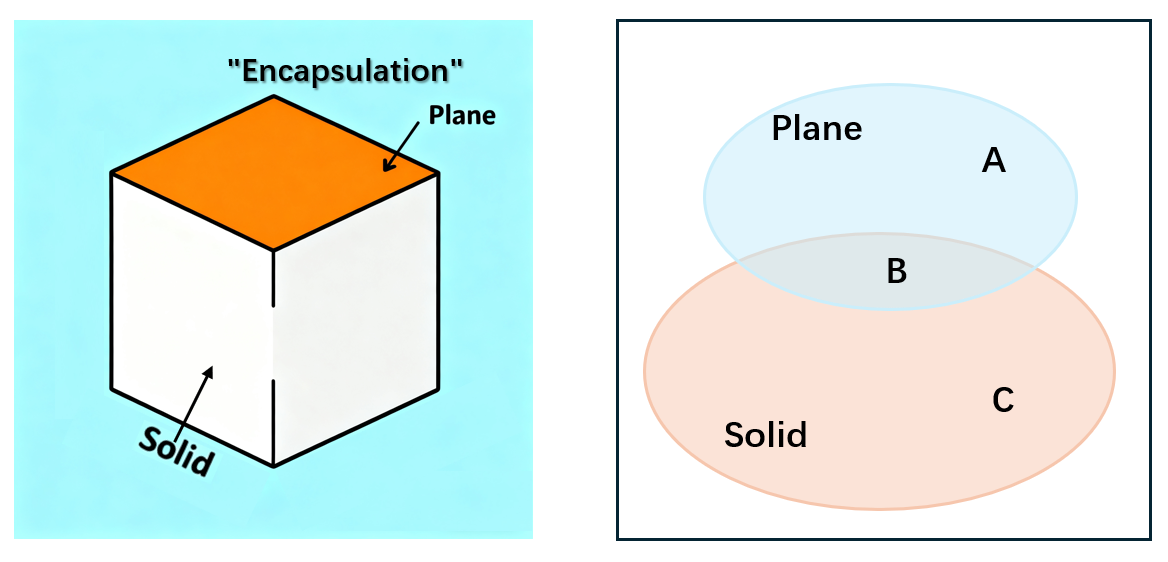}
    \caption{Regarding the encapsulation of the planar theorem, Coplanar(point\_seqs) will be treated as a Plane class to apply the theorems and "extend" in set A (See Fig. \ref{fig:placeholder16})}
    \label{fig:placeholder14}
\end{figure}
\begin{figure}[ht]
    \centering
    \includegraphics[width=1\linewidth]{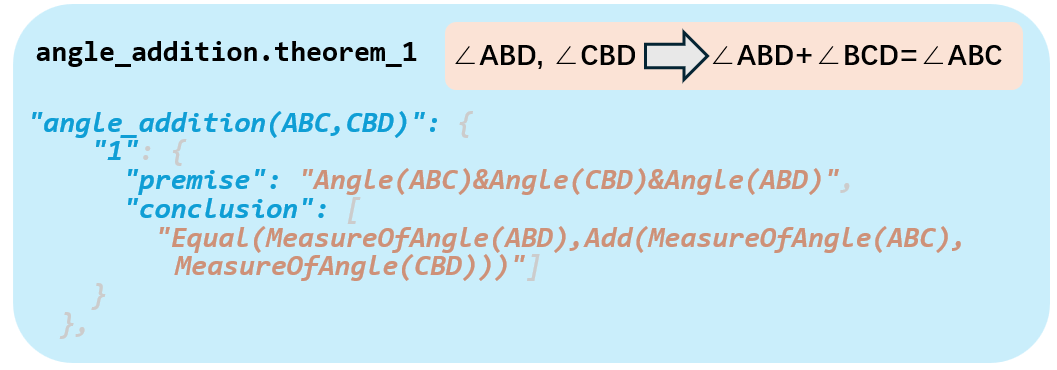}
    \caption{A theorem (as illustrated in Fig. \ref{fig:placeholder15}) that falls into set A holds true exclusively within the Plane Class}
    \label{fig:placeholder16}
\end{figure}
Traditional search has various optimization schemes, such as heuristics, but simple pruning of search is not the focus of this paper. The traditional search focuses only on "inferring the target from the known". In contrast, SGRE draws inspiration from SMT (Satisfiability Modulo Theories), which not only determines satisfiability and proactively identifies contradictions but also addresses multi-constraint conflicts (e.g., given a cube with two known distinct side lengths, it returns "unsatisfiable" and provides the cause of the conflict—specifically, the contradictory constraints) and combinations of multiple theories (e.g., hierarchical rapid evaluation of planar and solid properties: the constraints required for problem-solving vary under different theoretical frameworks; see Figure \ref{fig:placeholder15}). This demonstrates the feasibility of subsequent integration between Hilbert-Geo and large-scale mathematical tools. %such as Lean4.
%Notably, existing large-scale mathematical tools (e.g., Lean) lack the capacity for formalized solution of geometric problems.

\begin{figure*}
    \centering
    \includegraphics[width=1\linewidth]{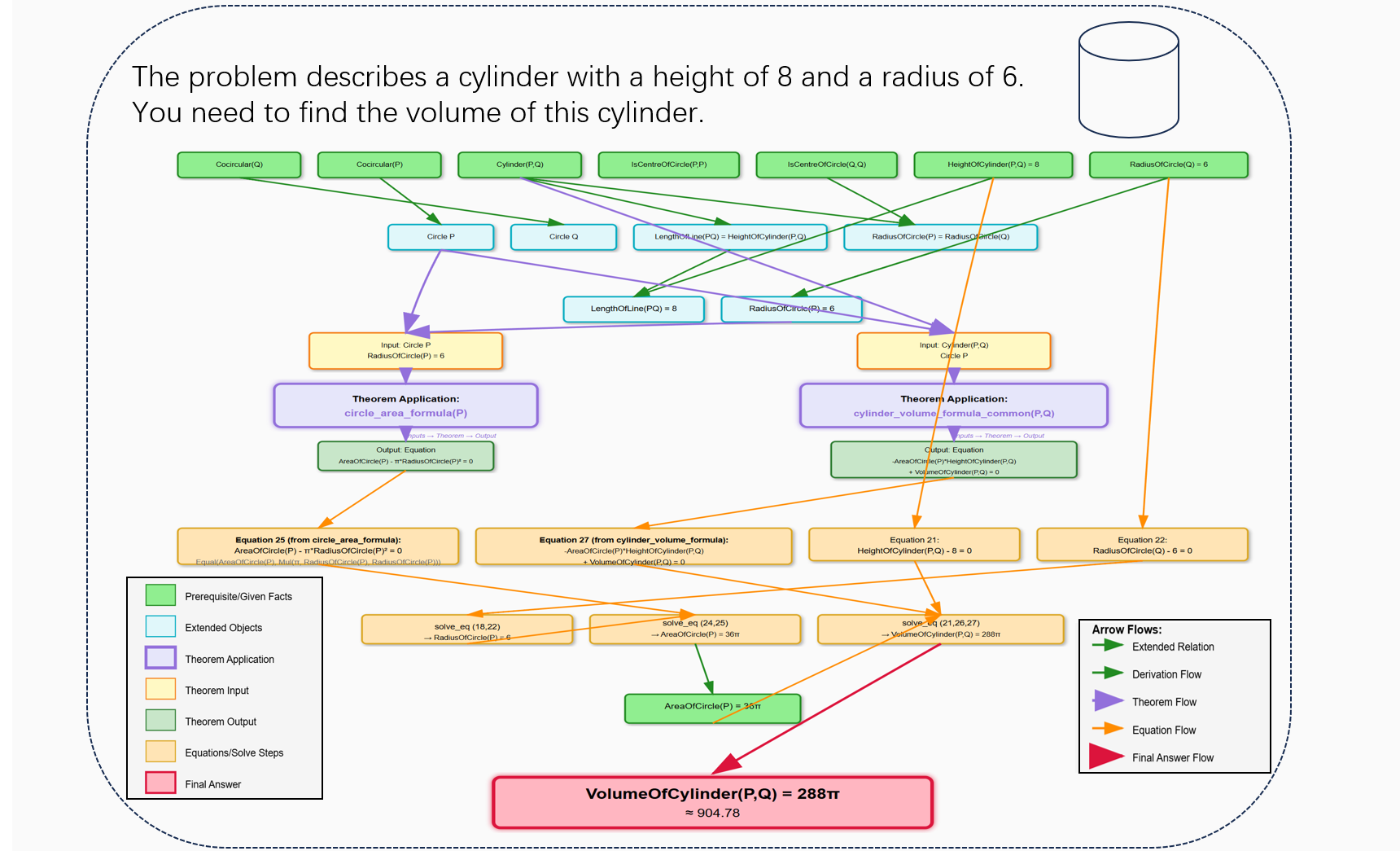}
    \caption{one illustrative example for reasoning based on Hilbert-Geo}
    \label{fig:bigtree}
\end{figure*}
\clearpage

%% file: sec/aE.tex
\clearpage
\section{Experiments}
\label{app:e}
This section supplements some content to the experimental part.

\subsection{Mathverse-solid}
\label{app:e1}
The findings in Table~\ref{tab:model_performance_trimmedE2} underscore the inherent challenges of Solid Geometry. GPT-5 emerges as the top-performing model with an overall accuracy of 62.9\%, followed by Gemini 2.5 pro at 59.7\% and Claude-3.7-Sonnet at 54.7\%. Notably, all other models score below 50\%, highlighting the significant hurdles that solid geometry reasoning presents even for cutting-edge large language models.

Notable performance disparities persist among state-of-the-art models across fine-grained tasks requiring robust reasoning. GPT-5 leads in SMR with a 67.1\% score yet falls short in SSI, managing only 52.9\%. Open-source models, exemplified by Deepseek-V3 67B (39.1\% in CSS and 34.7\% in SSI), consistently underperform relative to closed-source alternatives. Critically, all these models remain far behind human performance, underscoring the current challenges in replicating human-level solid geometry reasoning.
\begin{table}[ht]
\centering
\caption{MLLMs and Hilbert-Geo (Gemini-2.5-pro 45 samples) performances on Mathverse-solid; Four fine grains: Composite Solid Structures (CSS), Spatial Metric Relations (SMR), Solid Shape Identification (SSI),  Measurement of Solid Geometric Forms (MSGF); The underline represents the best performance of MLLMs}
\resizebox{\linewidth}{!}{%
\begin{tabular}{l|ccccc}

\hline
\makecell{Model} & Overall.Avg & CSS & SMR & SSI & MSGF \\ \hline
\multicolumn{6}{c}{ Closed-source MLLMs} \\ 
\hline
\makecell{GPT-4o} & 42.4 & 45.1 & 36.6 & 44.2 & 46.7\\
\makecell{GPT-5} & \underline{62.9} & \underline{64.9} & \underline{67.1} & 52.9 & 56.1 \\ 
\makecell{Claude 3.7 Sonnet} & 54.7 & 57.2 & 55.3 & 50.6 & 52.4 \\
\makecell{Gemini 2.5 Flash} & 47.5 & 50.1 & 47.2 & 44.5 & 45.9 \\
\makecell{Gemini 2.5 Pro} & 59.7 & 60.6 & 61.9 & \underline{54.1} & \underline{60.1} \\ \hline
\multicolumn{6}{c}{Open-source MLLMs} \\ \hline
\makecell{Llama 3.3 70B} & 36.0 & 38.2 & 36.5 & 33.8 & 33.1 \\
\makecell{Qwen2.5-VL-Instruct-32B} & 31.0 & 33.9 & 29.1 & 29.8 & 30.4 \\
\makecell{Qwen2.5-VL-Instruct-7B} & 22.9 & 24.7 & 20.4 & 23.5 & 24.2 \\
\makecell{Deepseek-V3 67B} & 35.4 & 39.1 & 33.7 & 34.7 & 32.2 \\ \hline
\multicolumn{6}{c}{Human Performances} \\ \hline
\makecell{Human} & 92.1 & 93.0 & 89.4 & 97.8 & 88.5 \\ \hline
\multicolumn{6}{c}{Hilbert-Geo} \\ \hline
\makecell{Hilbert-Geo (Ground-Truth) (Ours)} & 86.3 & 87.5 & 87.1 & 86.9 & 81.1 \\
\makecell{Hilbert-Geo (Gemini-2.5-Pro) (Ours)} & 84.1 & 85.9 & 84.9 & 80.2 & 78.3 \\
\hline
\end{tabular}%
}
\label{tab:model_performance_trimmedE2}
\end{table}
Hilbert-Geo, by contrast, delivers a distinct performance profile. Evaluated on Hilbert-Geo with 45 samples, it achieves top-tier results across all key dimensions: 85.9\% in CSS, 84.9\% in SMR, 80.2\% in SSI, 78.3\% in MSGF, and an overall accuracy of 84.1\%.

As shown in table~\ref{tab:model_performance_solidfgeo_avg_frontE1} and figure \ref{fig:tend11}. Based on Hilbert-Geo, the reasoning success rates of Gemini 2.5 Pro and GPT-5 rise sharply as the number of samples increases, approaching 90\% when the sample size reaches 45.
In contrast, open-source models such as Llama 3.2 70B exhibit a modest upward trend in performance, which also remains significantly lower than that of closed-source counterparts overall.
\begin{table}[ht]
\centering
\caption{Performance of Different Models on Hilbert-Geo (45 Samples): Accuracy, Average Time per Solved Problem, and Reasoning Steps; The underline represents the best performance of Hilbert-Geo}
\resizebox{\linewidth}{!}{%
\begin{tabular}{l|ccccc}
\hline
Model & Overall.Avg & CSS & SMR & SSI & MSGF \\ \hline
\multicolumn{6}{c}{Closed-source MLLMs} \\ \hline
Hilbert-Geo (GPT-4o) & 62.1 & 65.1 & 59.6 & 64.2 & 59.1 \\
Hilbert-Geo (GPT-5) & 80.5 & 84.2 & 77.1 & \underline{82.9} & 77.5 \\
Hilbert-Geo (Claude 3.7) & 73.2 & 76.2 & 69.8 & 74.6 & 72.4 \\
Hilbert-Geo (Gemini 2.5 Flash) & 69.1 & 70.1 & 67.5 & 71.5 & 67.9\\
Hilbert-Geo (Gemini 2.5 Pro) & \underline{84.1} & \underline{85.9} & \underline{84.9} & 80.2 & \underline{78.3} \\ \hline
\multicolumn{6}{c}{Open-source MLLMs} \\ \hline
Hilbert-Geo (Llama 3.3 70B) & 52.0 & 54.6 & 46.5 & 53.8 & 56.1\\
Hilbert-Geo (Qwen2.5-Vl-32B) & 38.7 & 43.9 & 36.1 & 35.8 & 36.4 \\
Hilbert-Geo (Qwen2.5-Vl-7B) & 29.4 & 30.4 & 26.9 & 33.7 & 28.2 \\
Hilbert-Geo (Deepseek-) & 46.6 & 49.1 & 44.7 & 44.7 & 47.2  \\ \hline
\multicolumn{6}{c}{Formal Language Ground Truth} \\ \hline
Hilbert-Geo (CDL) & 86.1 & 87.5 & 87.1 & 86.9 & 81.1 \\ \hline
\end{tabular}%
}
\label{tab:model_performance_solidfgeo_avg_frontE1}
\end{table}

\begin{figure}[ht]
    \centering
    \includegraphics[width=0.93\linewidth]{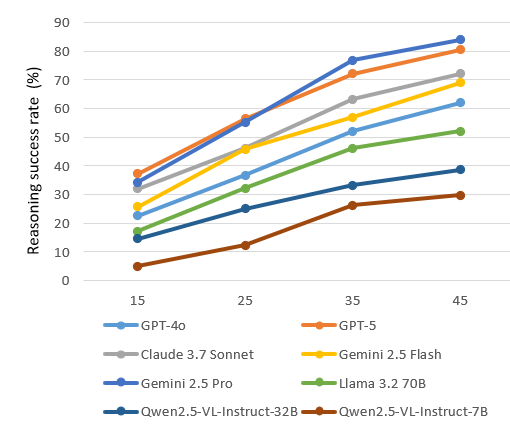}
    \caption{Reasoning performance of MLLMs under different numbers of samples based on Hilbert-Geo}
    \label{fig:tend11}
\end{figure}

\clearpage
\subsection{PlaneFGeo3k}
\label{app:e2}
\begin{table}[htbp]
  \centering
  \caption{MLLMs and Hilbert-Geo (Gemini-2.5-pro 45 samples) performances on PlaneFGeo3k}
  \label{tab:solid_geo_overall_accuracy}
  % 自适应页面宽度，避免列溢出（如需固定宽度可调整为 \resizebox{12cm}{!}）
  \resizebox{\linewidth}{!}{
    \begin{tabular}{lc}
      \toprule  % 美观粗线条（booktabs宏包），无宏包则替换为 \hline
      Model & Accuracy (\%) \\
      \midrule  % 中间细线条
      GPT-4o & 47.2 \\
      GPT-5 & 68.1 \\
      Claude 3.7 Sonnet & 60.0 \\
      Gemini 2.5 Flash & 55.6 \\
      Gemini 2.5 Pro & 72.3 \\
      Llama 3.2 70B & 33.3 \\
      Qwen2.5-VL-Instruct-32B & 24.8 \\
      Qwen2.5-VL-Instruct-7B & 17.5 \\
      Deepseek-V3 67B & 30.4 \\
      human & 87.9 \\
      Ground Truth & 84.1 \\
      Hilbert-Geo & 80.2 \\
      \bottomrule  % 底部粗线条
    \end{tabular}
  }
\end{table}
As shown in table~\ref{tab:solid_geo_overall_accuracy}, GPT-5 emerges as the top performer with an overall accuracy of 72.3\%, followed by Gemini 2.5 Pro at 68.1\% and Claude 3.7 Sonnet at 60.0\%. 
Notable performance disparities persist among leading models. Open-source models like Llama 3.2 70B consistently underperform compared to closed-source ones. Critically, all these models remain far behind human performance, highlighting the current difficulties in replicating human-level plane geometry reasoning.
Hilbert-Geo, by contrast, delivers a distinct performance profile. Evaluated on Plane-Geo with 45 samples, it achieves excellent results across all key areas, with an overall accuracy of 80.2\%.

%% file: main.bib
@String(CVPR= {IEEE Conf. Comput. Vis. Pattern Recog.})

@String(ICCV= {Int. Conf. Comput. Vis.})

@String(AAAI = {AAAI})

@String(CVPR  = {CVPR})

@String(ICCV  = {ICCV})

@InProceedings{vidhalluc2025,
    author    = {Li, Chaoyu and Im, Eun Woo and Fazli, Pooyan and others},
    title     = {VidHalluc: Evaluating Temporal Hallucinations in Multimodal Large Language Models for Video Understanding},
    booktitle = {Proceedings of the IEEE/CVF Conference on Computer Vision and Pattern Recognition (CVPR)},
    month     = {June},
    year      = {2025},
    pages = {13723--13733}
}

@article{zhao2023survey,
  author  = {Zhao, Wayne Xin and Zhou, Kun and Li, Junyi and others},
  title   = {A Survey of Large Language Models},
  journal = {arXiv preprint arXiv:2303.18223},
  year    = {2023}
}

@article{hadi2023survey,
  author  = {Hadi, Muhammad Usman and Al Tashi, Qasem and Shah, Abbas and others},
  title   = {Large Language Models: A Comprehensive Survey of its Applications, Challenges, Limitations, and Future Prospects},
  journal = {TechRxiv},
  year    = {2024},
  doi     = {10.36227/techrxiv.23589741.v6},
  url     = {https://doi.org/10.36227/techrxiv.23589741.v6},
  note    = {Preprint, version 6}
}

@article{wu2024survey,
  author  = {Wu, Likang and Zheng, Zhi and Qiu, Zhaopeng and others},
  title   = {A Survey on Large Language Models for Recommendation},
  journal = {World Wide Web},
  volume  = {27},
  number  = {5},
  pages   = {60},
  year    = {2024}
}

@article{jiang2024survey,
  author  = {Jiang, Juyong and Wang, Fan and Shen, Jiasi and others},
  title   = {A Survey on Large Language Models for Code Generation},
  journal = {ACM Transactions on Software Engineering and Methodology},
  volume  = {35},
  number  = {2},
  pages   = {1--72},
  year    = {2026},
  url     = {https://dl.acm.org/doi/10.1145/3747588}
}

@article{naveed2023comprehensive,
  author  = {Naveed, Humza and Khan, Asad Ullah and Qiu, Shi and others},
  title   = {A Comprehensive Overview of Large Language Models},
  journal = {ACM Transactions on Intelligent Systems and Technology},
  volume  = {16},
  number  = {5},
  pages   = {1--72},
  year    = {2025},
  url     = {https://dl.acm.org/doi/10.1145/3744746}
}

@misc{openai2024hello,
  author       = {OpenAI},
  title        = {Introducing GPT-5},
  year         = {2025},
  howpublished = {\url{https://openai.com/index/introducing-gpt-5/}},
  note         = {OpenAI announcement}
}

@inproceedings{lu2023mathvista,
  author    = {Lu, Pan and Bansal, Hritik and Xia, Tony and others},
  title     = {{MathVista}: Evaluating Mathematical Reasoning of Foundation Models in Visual Contexts},
  booktitle = {The Twelfth International Conference on Learning Representations},
  year      = {2024},
  note      = {Oral presentation},
  url       = {https://openreview.net/forum?id=KUNzEQMWU7}
}

@inproceedings{chen2021geoqa,
  author    = {Chen, Jiaqi and Tang, Jianheng and Qin, Jinghui and Liang, Xiaodan and Liu, Lingbo and Xing, Eric P. and Lin, Liang},
  title     = {{GeoQA}: A Geometric Question Answering Benchmark Towards Multimodal Numerical Reasoning},
  booktitle = {Findings of the Association for Computational Linguistics: ACL-IJCNLP 2021},
  pages     = {513--523},
  year      = {2021}
}

@inproceedings{chen2022unigeo,
  author    = {Chen, Jiaqi and Li, Tong and Qin, Jinghui and others},
  title     = {{UniGeo}: Unifying Geometry Logical Reasoning via Reformulating Mathematical Expression},
  booktitle = {Proceedings of the 2022 Conference on Empirical Methods in Natural Language Processing},
  pages     = {3313--3323},
  year      = {2022}
}

@article{zhang2023formalgeo,
  author  = {Zhang, Xiaokai and Zhu, Na and He, Yiming and others},
  title   = {{FormalGeo}: An Extensible Formalized Framework for Olympiad Geometric Problem Solving},
  journal = {arXiv preprint arXiv:2310.18021},
  year    = {2023},
  url      = {https://arxiv.org/abs/2310.18021}
}

@book{badescu2004projective,
  author    = {B{\u a}descu, Lucian},
  title     = {Projective Geometry and Formal Geometry},
  publisher = {Birkh{\"a}user Basel},
  year      = {2004}
}

@inproceedings{lu2021intergps,
  author    = {Lu, Pan and Gong, Ran and Jiang, Shibiao and others},
  title     = {{Inter-GPS}: Interpretable Geometry Problem Solving with Formal Language and Symbolic Reasoning},
  booktitle = {Proceedings of the 59th Annual Meeting of the Association for Computational Linguistics and the 11th International Joint Conference on Natural Language Processing (Volume 1: Long Papers)},
  pages     = {6774--6786},
  year      = {2021},
  address   = {Online},
  publisher = {Association for Computational Linguistics},
  doi       = {10.18653/v1/2021.acl-long.528},
  url       = {https://aclanthology.org/2021.acl-long.528/}
}

@inproceedings{wang2025mv,
  author    = {Wang, Ke and Pan, Junting and Shi, Weikang and others},
  title     = {Measuring Multimodal Mathematical Reasoning with MATH-Vision Dataset},
  booktitle = {NeurIPS 2024 Datasets and Benchmarks Track},
  year      = {2024},
  doi       = {10.52202/079017-3014},
  url       = {https://openreview.net/forum?id=QWTCcxMpPA}
}

@article{pittalis2010types,
  author    = {Pittalis, M. and Christou, C.},
  title     = {Types of reasoning in 3D geometry thinking and their relation with spatial ability},
  journal   = {Educational Studies in Mathematics},
  volume    = {75},
  number    = {2},
  pages     = {191--212},
  year      = {2010}
}

@inproceedings{murphy2024autoformalizing,
  author    = {Murphy, Logan and Yang, Kaiyu and Sun, Jialiang and others},
  title     = {{Autoformalizing} Euclidean Geometry},
  booktitle = {Proceedings of the 41st International Conference on Machine Learning},
  series     = {Proceedings of Machine Learning Research},
  volume     = {235},
  pages      = {36847--36893},
  year       = {2024},
  publisher  = {PMLR},
  url        = {https://proceedings.mlr.press/v235/murphy24a.html}
}

@inproceedings{zhang2024mathverse,
  author    = {Zhang, Renrui and Jiang, Dongzhi and Zhang, Yichi and others},
  title     = {{MathVerse}: Does Your Multi-modal LLM Truly See the Diagrams in Visual Math Problems?},
  booktitle = {European Conference on Computer Vision},
  pages     = {169--186},
  year      = {2024},
  publisher = {Springer},
  doi       = {10.1007/978-3-031-73242-3_10},
  url       = {https://doi.org/10.1007/978-3-031-73242-3_10}
}

@inproceedings{wang2025solidgeo,
  author    = {Wang, Peijie and Yang, Chao and Li, Zhong-Zhi and others},
  title     = {{SolidGeo}: Measuring Multimodal Spatial Math Reasoning in Solid Geometry},
  booktitle = {NeurIPS 2025 Datasets and Benchmarks Track},
  year      = {2025},
  note      = {Poster},
  url       = {https://openreview.net/forum?id=74f76e48ea09974b736b0bb829508b7c4a695fd6}
}

@article{gemini2023,
  author  = {Gemini Team},
  title   = {Gemini: A Family of Highly Capable Multimodal Models},
  journal = {arXiv preprint arXiv:2312.11805},
  year    = {2023}
}

@misc{openai2024gpt4o,
  author       = {OpenAI},
  title        = {Hello GPT-4o},
  year         = {2024},
  howpublished = {\url{https://openai.com/index/hello-gpt-4o/}},
  note         = {OpenAI announcement}
}

@article{bai2025qwen,
  author  = {Bai, Shuai and Chen, Keqin and Liu, Xuejing and others},
  title   = {{Qwen2.5-VL} Technical Report},
  journal = {arXiv preprint arXiv:2502.13923},
  year    = {2025}
}

@misc{openai2025gpt5,
  author       = {OpenAI},
  title        = {{GPT-5} System Card},
  year         = {2025},
  howpublished = {\url{https://openai.com/index/gpt-5-system-card/}},
  note         = {OpenAI system card}
}

@misc{meta2025llama3.3,
  author       = {Meta},
  title        = {{Llama 3.3} Model Cards and Prompt Formats},
  year         = {2024},
  note         = {Official Meta documentation for Llama 3.3, release date: December 6, 2024}
}

@article{deepseek2024v3,
  author  = {DeepSeek-AI},
  title   = {{DeepSeek-V3}  Technical Report},
  journal = {arXiv preprint arXiv:2412.19437},
  year    = {2024},
  url     = {https://arxiv.org/abs/2412.19437}
}

@book{hibi2003groebner,
  editor    = {Hibi, Takayuki},
  title     = {Gr\"obner Bases: Statistics and Software Systems},
  publisher = {Springer Tokyo},
  year      = {2014},
  doi       = {10.1007/978-4-431-54574-3},
  url       = {https://doi.org/10.1007/978-4-431-54574-3},
  note      = {Copyright 2013}
}

@article{cohen1960coefficient,
  title={A coefficient of agreement for nominal scales},
  author={Cohen, Jacob},
  journal={Educational and Psychological Measurement},
  volume={20},
  number={1},
  pages={37--46},
  year={1960},
  publisher={SAGE Publications Sage CA: Los Angeles, CA},
  doi={10.1177/001316446002000104}
}

@article{wu2025nesygeo,
  author    = {Wu, Weiming and Ye, Jiachen and Wang, Zihao and Zhou, Ziyi and Li, Yifan and Guo, Luzhen},
  title     = {NeSyGeo: A Neuro-Symbolic Framework for Multimodal Geometric Reasoning Data Generation},
  journal   = {arXiv preprint arXiv:2505.17121},
  year      = {2025}
}

@incollection{hasan2015formal,
  author    = {Hasan, Osman and Tahar, Sofiene},
  title     = {Formal Verification Methods},
  booktitle = {Encyclopedia of Information Science and Technology, Third Edition},
  publisher = {IGI Global Scientific Publishing},
  year      = {2015},
  pages     = {7162--7170}
}

@article{jian2023solving,
  author  = {Jian, Pengpeng and Guo, Fucheng and Wang, Yanli and others},
  title   = {Solving Geometry Problems via Feature Learning and Contrastive Learning of Multimodal Data},
  journal = {Computer Modeling in Engineering \& Sciences},
  volume  = {136},
  number  = {2},
  pages   = {1707--1728},
  year    = {2023},
  doi     = {10.32604/cmes.2023.023243},
  url     = {https://doi.org/10.32604/cmes.2023.023243}
}

@article{battaglia2018relational,
  author  = {Battaglia, Peter W. and Hamrick, Jessica B. and Bapst, Victor and others},
  title   = {Relational Inductive Biases, Deep Learning, and Graph Networks},
  journal = {arXiv preprint arXiv:1806.01261},
  year    = {2018}
}

@inproceedings{li2024survey,
  author    = {Li, Zhaoyu and Sun, Jialiang and Murphy, Logan and others},
  title     = {A Survey on Deep Learning for Theorem Proving},
  booktitle = {Proceedings of the First Conference on Language Modeling},
  year      = {2024},
  url       = {https://openreview.net/forum?id=zlw6AHwukB}
}

@inproceedings{chen2024overthinking,
  author    = {Chen, Xingyu and Xu, Jiahao and Liang, Tian and others},
  title     = {Do {NOT} Think That Much for 2+3=? On the Overthinking of Long Reasoning Models},
  booktitle = {Proceedings of the 42nd International Conference on Machine Learning},
  series    = {Proceedings of Machine Learning Research},
  volume    = {267},
  pages     = {9487--9499},
  year      = {2025},
  publisher = {PMLR},
  url       = {https://proceedings.mlr.press/v267/chen25bx.html}
}

@article{wang2025underthinking,
  author    = {Wang, Yue and Liu, Qiuzhi and Xu, Jiahao and Liang, Tian and Chen, Xingyu and He, Zhiwei and Song, Linfeng and Yu, Dian and Li, Juntao and Zhang, Zhuosheng and others},
  title     = {Thoughts Are All Over the Place: On the Underthinking of O1-like LLMs},
  journal   = {arXiv preprint arXiv:2501.18585},
  year      = {2025}
}

@inproceedings{ning2023symbolic,
  title={A Symbolic Characters Aware Model for Solving Geometry Problems},
  author={Ning, Maizhen and Wang, Qiu-Feng and Huang, Kaizhu and Huang, Xiaowei},
  booktitle={Proceedings of the 31st ACM International Conference on Multimedia (MM '23)},
  pages={7767--7775},
  year={2023},
  publisher={ACM},
  address={New York, NY, USA},
  doi={10.1145/3581783.3612570},
  url={https://dl.acm.org/doi/abs/10.1145/3581783.3612570},
  isbn={9798400701085}
}

@inproceedings{ning2025gns,
  title={{GNS}: Solving Plane Geometry Problems by Neural-Symbolic Reasoning with Multi-Modal LLMs},
  author={Ning, Maizhen and Zhou, Zihao and Wang, Qiufeng and Huang, Xiaowei and Huang, Kaizhu},
  booktitle={Proceedings of the AAAI Conference on Artificial Intelligence},
  volume={39},
  number={23},
  pages={24957--24965},
  year={2025},
  doi={10.1609/aaai.v39i23.34679},
  url={https://ojs.aaai.org/index.php/AAAI/article/view/34679}
}

@misc{google2025gemini25,
  author       = {Google},
  title        = {Gemini 2.5: Updates to Our Family of Thinking Models},
  year         = {2025},
  howpublished = {\url{https://developers.googleblog.com/en/gemini-2-5-thinking-model-updates/}},
  note         = {Introduces Gemini 2.5 Pro and Gemini 2.5 Flash updates}
}

@misc{anthropic2025claude37,
  author       = {Anthropic},
  title        = {{Claude} 3.7 Sonnet System Card},
  year         = {2025},
  howpublished = {\url{https://www.anthropic.com/claude-3-7-sonnet-system-card}},
  note         = {System card for Claude 3.7 Sonnet}
}

@article{arnon1984cad,
  author  = {Arnon, Dennis S. and Collins, George E. and McCallum, Scott},
  title   = {Cylindrical Algebraic Decomposition I: The Basic Algorithm},
  journal = {SIAM Journal on Computing},
  volume  = {13},
  number  = {4},
  pages   = {865--877},
  year    = {1984},
  doi     = {10.1137/0213054}
}

@inproceedings{xia2025geox,
  author    = {Xia, Renqiu and Li, Mingsheng and Ye, Hancheng and others},
  title     = {{GeoX}: Geometric Problem Solving Through Unified Formalized Vision-Language Pre-training},
  booktitle = {The Thirteenth International Conference on Learning Representations},
  year      = {2025},
  url       = {https://openreview.net/forum?id=6RiBl5sCDF}
}

@inproceedings{zhao2025pigps,
  author    = {Zhao, Junbo and Zhang, Ting and Sun, Jiayu and Tian, Mi and Huang, Hua},
  title     = {{Pi-GPS}: Enhancing Geometry Problem Solving by Unleashing the Power of Diagrammatic Information},
  booktitle = {Proceedings of the IEEE/CVF International Conference on Computer Vision (ICCV)},
  pages     = {1526--1536},
  year      = {2025},
  month     = {October},
  url       = {https://openaccess.thecvf.com/content/ICCV2025/papers/Zhao_Pi-GPS_Enhancing_Geometry_Problem_Solving_by_Unleashing_the_Power_of_ICCV_2025_paper.pdf}
}
